
\documentclass{article}

\usepackage{microtype}
\usepackage{graphicx}
\usepackage{subfigure}
\usepackage{multirow}
\usepackage{amsmath}
\usepackage{amssymb}
\usepackage{algorithm}
\usepackage{algorithmic}
\usepackage{bm}
\usepackage{color}
\newtheorem{theorem}{Theorem}
\newtheorem{lemma}{Lemma}
\newtheorem{corollary}{Corollary}
\newtheorem{remark}{\noindent Remark}
\newenvironment{proof}{{\noindent\it Proof}\quad}{\hfill $\square$\par}
\newtheorem{proposition}{Proposition}
\usepackage{mathrsfs}
\usepackage{booktabs} 
\usepackage[T1]{fontenc}
\usepackage{hyperref}

\usepackage[accepted]{icml2020}


\icmltitlerunning{Meta Transition Adaptation for Robust Deep Learning with Noisy Labels}

\begin{document}

\twocolumn[
\icmltitle{Meta Transition Adaptation for Robust Deep Learning with Noisy Labels}



\icmlsetsymbol{equal}{*}

\begin{icmlauthorlist}

\icmlauthor{Jun Shu}{to}
\icmlauthor{Qian Zhao}{to}
\icmlauthor{Zongben Xu}{to}
\icmlauthor{Deyu Meng}{to,goo}
\end{icmlauthorlist}

\icmlaffiliation{to}{School of Mathematics and Statistics and Ministry of Education Key Lab of Intelligent Networks	and Network Security, Xian Jiaotong University, Xi'an, China}
\icmlaffiliation{goo}{Macau Institute of Systems Engineering, Macau University of Science and Technology,Macau,China}

\icmlcorrespondingauthor{Deyu Meng}{dymeng@mail.xjtu.edu.cn}

\icmlkeywords{Machine Learning, ICML}

\vskip 0.3in
\vspace{-2mm}
]



\printAffiliationsAndNotice{}  

\begin{abstract}
To discover intrinsic inter-class transition probabilities underlying data, learning with noise transition has become an important approach for robust deep learning on corrupted labels. Prior methods attempt to achieve such transition knowledge by pre-assuming strongly confident anchor points with 1-probability belonging to a specific class, generally infeasible in practice, or directly jointly estimating the transition matrix and learning the classifier from the noisy samples, always leading to inaccurate estimation misguided by wrong annotation information especially in large noise cases. To alleviate these issues, this study proposes a new meta-transition-learning strategy for the task. Specifically, through the sound guidance of a small set of meta data with clean labels, the noise transition matrix and the classifier parameters can be mutually ameliorated to avoid being trapped by noisy training samples, and without need of any anchor point assumptions. Besides, we prove our method is with statistical consistency guarantee on correctly estimating the desired transition matrix. Extensive synthetic and real experiments validate that our method can more accurately extract the transition matrix, naturally following its more robust performance than prior arts. Its essential relationship with label distribution learning is also discussed, which explains its fine performance even under no-noise scenarios.
\end{abstract}\vspace{-2mm}

\section{Introduction}\label{introduction}
While deep neural networks (DNNs) have recently obtained remarkable success on various applications \cite{krizhevsky2012imagenet,he2016deep}, its performance largely relies on a pre-collected large-scale dataset with high quality of human annotations. In real-world applications, however, it is notoriously expensive both in time and money to achieve such data. In real practice, instead, data labels are always collected by coarse annotation sources, like crowdsourcing systems \cite{bi2014learning} or search engines \cite{xiao2015learning,liang2016learning}, naturally resulting in the noisy (incorrect) label problem in training data. Learning with such biased training data easily encounters the overfitting issue, thus hampering the generalization performance of the utilized learning regimes \cite{zhang2016understanding}.

\begin{figure}[t]
	\centering
	\vspace{-1mm}
	\includegraphics[width=1.05\linewidth]{./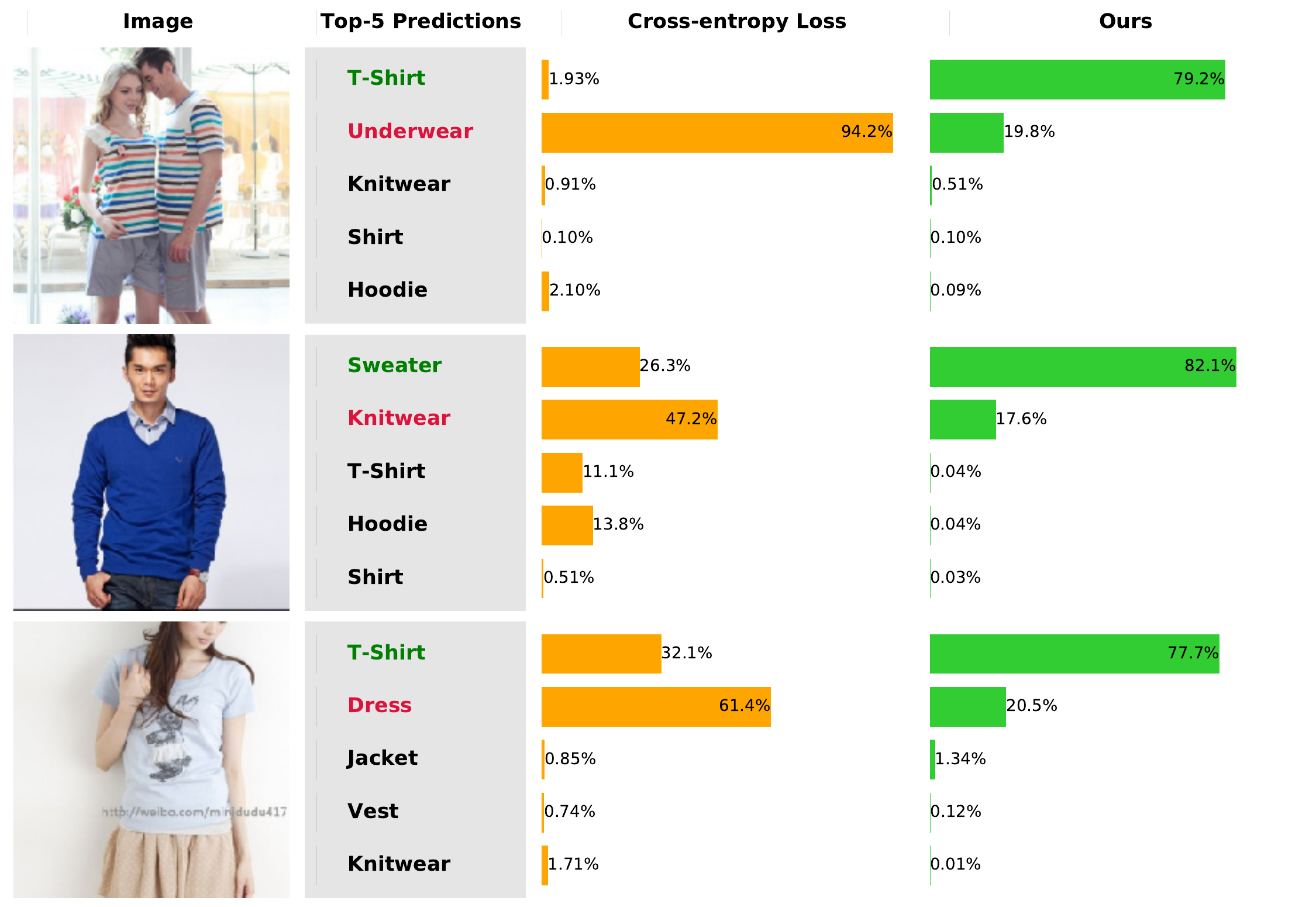}
	\vspace{-6mm}
	\caption{Examples of generated labels by training with cross-entropy loss (third column) and the proposed method (fourth column) on Clothing1M. The second column show the top-5 predictions of two methods, and also can be seen as ambiguous classes of the image most possibly belongs to. The green and red labels denote the clean and the noisy labels, respectively.}\label{fig1}
	\vspace{-6mm}
\end{figure}

The commonly used approach against this robust learning issue is to select confident examples and remove suspect ones \cite{chang2017active} or to correct noisy labels to their more possibly true labels \cite{arazo2019unsupervised}. These methods, however, implicitly assume a sample belongs only to one class, but neglect the intrinsic labeling noise insight in real-world that there are essential ambiguities among various sample categories. While such ``noisy label'' are useful to deliver intrinsic knowledge of inter-class transition principle naturally existed in data annotation, just coarsely removing noisy samples or transferring a noisy label to another ignores this label noise generation clue, and thus makes them still have room for further performance improvement.

Such label ambiguity issue can be easily understood by seeing Fig.\ref{fig1}, where the samples are from Clothing1M \cite{xiao2015learning}, a large-scale clothing dataset by crawling images from several online shopping websites. It represents a typical real-world label corruption scenario: there exists an unknown noise transition matrix to flip the more possibly true label to other less possible ones with probability, and thus to produce noisy labels. Directly training a DNN classifier by taking given sample labels as deterministic, the top-1 predictions tend to be consistent with the noisy labels, naturally conducting overfitting issue, as clearly shown in the third column of Fig.\ref{fig1}. Achieving the underlying noise transition matrix is thus expected to be helpful for alleviating such robust issue by thoroughly extracting the real noisy label distribution and ameliorating the quality of trained classifier (as depicted in the fourth column of the figure).

Pervious methods for noise transition matrix estimation can be roughly summarized as two solutions. One is to estimate this matrix on pre-assumed anchor points, i.e., sample(s) certainly belonging to each class, in advance, and subsequently fix it to train the classifier. However, such prior knowledge \cite{scott2013classification,patrini2017making} are generally infeasible in practice. The other solution is to jointly estimate the noise transition matrix and the classifier parameters in a unified framework \cite{sukhbaatar2014training,goldberger2016training}. Although it avoids the anchor point assumption, it always obtains inaccurate estimation misguided by wrong annotation information especially in large noise cases, as clearly depicted in our experiments.

Against the above issues, this paper proposes a new meta-transition-learning strategy against the noisy labels. The main idea is to leverage a small set of meta-data with clean labels to guide the estimation of noise transition. In summary, this study mainly made three-fold contributions. \vspace{-1mm}
 \begin{itemize}
\item  We propose a new learning strategy to estimate the noise transition matrix in a meta-learning manner. Under the guidance of a small set of meta data with clean labels,
    the noise transition matrix and the classifier parameters can be mutually ameliorated to avoid being trapped by noisy training samples, and without need of any anchor point assumptions.\vspace{-1mm}
\item  We show that our method can finely estimate the desired transition matrix under the guidance of the meta data with a statistical consistency guarantee. Comprehensive synthetic and real experiments validate that our method can more accurately extract the transition matrix underlying data, naturally following its more robust performance, than previous SOTA methods.\vspace{-1mm}
\item We discuss the essential relationship between our method and label distribution learning, which explains its fine performance even under no-noise scenarios. Experiments on out-of-training-distribution behavior and adversarial attacks shows that our method can bring model better generalization and robustness.
 \end{itemize}
The paper is organized as follows. Section \ref{related_work} reviews the related works. Section \ref{section3} introduces the proposed meta learning method, as well as some of its fine statistical properties. Section \ref{experiment} demonstrates experimental results. Section \ref{ldl} discusses the relationship between our method and label distribution learning, and a conclusion is finally made.

\vspace{-2mm}
\section{Related Work}\label{related_work}
\textbf{Learning with Noise Transition.} Transition matrix reflects the probabilities that most probable true labels flip into other ``noise'' ones, which has been previously employed to modify loss functions to help improve the training performance \cite{natarajan2013learning,scott2015rate}. There exist mainly two approaches to estimate the noise transition matrix.
One is to leverage a two-step solution to pre-estimate noise transition with the anchor point prior assumption and then use it to train the classifier. E.g., \cite{patrini2017making} proposed a theoretically sound loss correction method for the task by using pre-calculated noise transition knowledge, which are obtained on heuristically collected anchor points from the unsupervised dataset. Afterwards, GLC \cite{hendrycks2018using} used a small set of pre-assumed clean-label samples to estimate the noise transition to further improve estimation stability. These methods, however, require to pre-specify instances belonging to a special class with probability exactly or at least very approaching one, which is always an infeasible task in practice. The approximate used anchor points always lead to inaccurate estimation of the matrix, and thus hamper the subsequent training accuracy.

The other approach is to jointly estimate the noise transition matrix and the classifier parameter in a unified framework without employing anchor points. \citet{sukhbaatar2014training} first learned a linear layer with a trace constrained, which pushes the linear layer to be interpreted as the transition matrix between the true and noisy labels. \cite{jindal2016learning} further ameliorated the result by additional dropout regularization. Subsequently, S-Model \cite{goldberger2016training} modelled the noise transition with a Softmax layer beyond linear.
Recently, T-Revision \cite{xia2019anchor} introduced a slack variable to revise the pre-estimated matrix and validate the revision on noisy validation set. Albeit with concise calculation paradigm, the accuracy of these methods tend to be hampered misguided by noisy labels, especially in heavy noise rate cases, as clearly shown in our experiments.

\textbf{Other methods of learning with noisy labels.} We also shortly introduce two typical strategies for handling noisy labels issue: label correction and reliable example selection approaches. The former aims to correct noisy labels to their true ones via an inference step, like directed graphical models \cite{xiao2015learning}, conditional random fields \cite{vahdat2017toward} or knowledge graphs \cite{li2017learning}. \cite{tanaka2018joint} used the network outputs to predict hard or soft labels. Decouple \cite{malach2017decoupling} selected the samples with different label predictions of two networks, while  Co-teaching \cite{han2018co} selected its small-loss samples as clean samples for each network. INCV \cite{chen2019understanding} randomly divided the noisy data and then utilized cross-validation to identify clean samples by removing large-loss samples at each iteration. The other reliable example selection approach mainly adopts sample re-weighting schemes by imposing weights on samples based on their reliability for training. Typical methods include SPL \cite{kumar2010self} and its extensions \cite{jiang2014easy,jiang2014self,meng2017theoretical}, by reducing effects of examples with large losses, and pay more attention to easy samples with smaller losses. Some other methods along this line include iterative reweighting strategy \cite{zhang2018generalized},  Bayesian latent variables inference \cite{wang2017robust} and so on.

Recently, some works try to combine advantages of above two approaches. For example, SELFIE \cite{song2019selfie} trained the network on selectively refurbished false-labeled samples that can be corrected with a high precision together with small-loss ones. \cite{arazo2019unsupervised} used a two-component mixture model to character the loss distribution of clean and noisy samples in an unsupervised way, and used mixup data augmentation to achieve noisy label correction.
\cite{shen2019learning} proposed to iteratively minimize the trimmed loss to select samples with lowest current loss and retrain a model on only these samples, which is proved that recovers the ground truth in generalized linear models.

\textbf{Meta learning methods.} Inspired by meta-learning developments \cite{schmidhuber1992learning,thrun2012learning,finn2017model,shu2018small,shu2019meta}, recently some methods were proposed to make DNNs robust to label noise. However, existing methods focus on learning an adaptive weighting scheme imposed on data to make the learning more automatic and reliable. Typical methods along this line include MentorNet \cite{jiang2018mentornet}, L2RW \cite{ren2018learning} and Meta-Weight-Net \cite{shu2019meta}. This paper can be seen as the first exploration of meta learning on fitting noise transition information.

\vspace{-2mm}
\section{Meta Transition Adaptation Method}\label{section3}
\subsection{Preliminaries}
We consider the problem of $c$-class classification. Let  $\mathcal{X} \subset \mathbb{R}^d$ be the feature space,  $\mathcal{Y}=\{1,2,\cdots,c\}$ be the label space, and $(X,Y), (X,\widetilde{Y}) \in (\mathcal{X},\mathcal{Y})$ denote the underlying data distributions with true and noisy labels.
In practice, we assume that the labels of the collected training examples are independently corrupted from the true label distribution. Thus what we can obtain are the noisy training samples $\widetilde{D}= \{(x_i,\widetilde{y}_i)\}_{i=1}^N$, corresponding to the latent true data samples $\mathcal{D}= \{(x_i,y_i)\}_{i=1}^N$. The two datasets are i.i.d. drawn from true and noisy data distributions $p_{X\widetilde{Y}}$ and $p_{XY}$, respectively.

Assume our classifier model is a DNN architecture with $d$ layers comprising a transformation $\bm{h}: \mathcal{X}\rightarrow \mathbb{R}^c$, where $\bm{h}=\bm{h}^{(d)}\circ\bm{h}^{(d-1)}\circ\cdots\circ\bm{h}^{(1)}$ is the composition of a series of intermediate transformations layers $\bm{h}^{(i)}$. Each $\bm{h}^{(i)}$ is defined  as:
\begin{align*}
\bm{h}^{(i)}(z) &= \sigma(W^{(i)}z), i=1,\cdots,d-1,\\
\bm{h}^{(d)}(z) & = W^{(d)}z,
\end{align*}
where $W^{(i)}$ denote the classifier parameters to be estimated \footnote{Here, we omit the bias vector in each layer.}, and $\sigma$ is the activation function such as ReLU \cite{glorot2011deep}. We assume that the output layer is a Softmax layer, and then the output is $f_i(x)=\frac{\exp(\bm{h}_i(x))}{\sum_{k=1}^c\exp(\bm{h}_k(x))},i=1,2,\cdots,c$, and the predicted label is thus given by $\arg\max_{i=1,2,\cdots,c}f_i(x)$.
The Softmax output can be interpreted as a $c$-dimensional vector approximating the class-conditional probabilities $p(Y|X)$. We denote it by $\hat{p}(Y|X)$, also written as $\hat{p}(Y|X)=f(X)$. The expected risk on clean data is defined as \cite{bartlett2006convexity}:
\begin{align} \label{eq1}
R(f) = \mathbb{E}_{(X,Y)\sim P_{XY}} \ell (f(X),Y),
\end{align}
where $\ell:  \mathbb{R}^c \times\mathcal{Y}\rightarrow \mathbb{R}$ is the loss function.

Since the distribution $P_{XY}$ is usually unknown, we use the empirical risk $R_{N}(f)$  over dataset $D$ to approximate $R(f)$,
\begin{align}\label{eq3}
 R_N(f) = \frac{1}{N}\sum_{i=1}^N \ell (f(x_i^{(m)}),y_i^{(m)}).
\end{align}
In this study, we assume there are label transition probabilities between different classes, as commonly adopted in the previous works \cite{natarajan2013learning,sukhbaatar2014training, patrini2017making,goldberger2016training}. The probability of each label $y$ in the training set flipping to $\widetilde{y}$ is expressed as $p(\widetilde{Y}=\widetilde{y}|Y=y)$.
We utilize a noise transition matrix $T\in [0,1]^{c\times c}$ \cite{van2017theory} to represent the probability $p(\widetilde{Y}=\widetilde{y}|Y=y)$, so that $T_{ij}=p(\widetilde{Y}=j|Y=i), \forall i,j$. The matrix is row-stochastic and not necessarily symmetric across the classes.

If we directly learn the classifier on the noisy data, we would obtain a class posterior predictor for noisy labels $p(\widetilde{Y}|X)$.
Noise transition matrix bridges $p(\widetilde{Y}|X)$ and the class posterior predictor for clean labels as follows:
\begin{align}\label{eqlabel}
p(\widetilde{Y}=j|X=x) = \sum_{i=1}^c T_{ij} p(Y=i|X=x),
\end{align}
and the corresponding matrix form can be written as $p(\widetilde{Y}|X)=T^Tp(Y|X)$. It is easy to observe that once the noise transition matrix is obtained, we can recover the desired estimator of class posterior predictor $p(Y|X)$ by the softmax output $f(x)$ through training the classifier $p(\widetilde{Y}|X)$, which is obtained by modifying the $p(Y|X)$ with $T$. Thus the expected risks with respect to noisy data is
\begin{align}\label{eq32}
&\overline{R}(f) = \mathbb{E}_{(X,\widetilde{Y})\sim P_{X\widetilde{Y}}} \ell (T^Tf(X),\widetilde{Y}),
\end{align}
and the empirical risk over noisy dataset $\widetilde{D}$ is
\begin{align}\label{eq31}
\overline{R}_{N}(f)=-\frac{1}{N}\sum_{i=1}^N \ell (T^Tf(x_i),\widetilde{y}_i).
\end{align}
%
It has been exploited to build a classifier-consistent algorithm \cite{patrini2017making,xia2019anchor}, i.e., once the noise transition is obtained, by increasing the size of noisy examples, the learned classifier of Eq.(\ref{eq31}) will converge to the optimal classifier learned by clean examples of Eq.(\ref{eq3}).

\subsection{Existing Estimation Methods}
The success of classifier-consistent algorithms depends on the accurate estimation of the transition matrix. There exist two strategies to learn the matrix. One is a two-stage regime to utilize anchor point assumption \cite{patrini2017making} to pre-estimate the noise transition and then use it to train the classifier. By assuming instance $x$ is the anchor point for class $i$ if $p(Y=i|X=x)=1$, and it holds that
\begin{align}
p(\widetilde{Y}=i|X=x) = \sum_{k=1}^c T_{kj}p(Y=k|X=x)= T_{ij},
\end{align}
since $p(Y=k|X=x)=0, \ \forall k \neq i$. Thus if $p(\widetilde{Y}|X)$ can be approximated by the softmax output $f(x)$ (i.e., $\hat{p}(\widetilde{Y}|X)$), $T$ can be obtained via estimating the noisy class posterior probabilities for anchor points. To pre-attain such anchor points, \citet{patrini2017making} designed certain heuristic strategy on unsupervised samples, and \citet{hendrycks2018using} used a small set of clean samples to simulate anchor points. Once obtaining $T$, it can recover $p(Y|X)$ by optimizing Eq.(\ref{eq31}) according to classifier-consistent algorithms.
However, the prior on anchor points is always hard to achieve in practice, increasing the difficulty of using them.

The other is a one-stage strategy to jointly estimate the noise transition matrix and the classifier parameters in a unified framework, and the noise transition $T$ can be modeled as a constrained linear layer \cite{sukhbaatar2014training} or a Softmax layer \cite{goldberger2016training}.
For example, S-Model \cite{goldberger2016training} modeled the matrix by adding another Softmax layer to the network, whose parameters can be learned using standard techniques for neural network training. Thus, they trained the classifier and Softmax layer simultaneously directly on the noisy data. At test time, they removed the adding softmax layer and used the classifier to predict the true labels.
Recently, \citet{xia2019anchor} proposed a T-Revision method to approximate $T$ by gradually ameliorating a slack variable imposed on it, together with updating the classifier parameters. The limitation of these methods mainly lies on its easy misguidance by the noisy annotations, especially in large noise cases, since they are directly trained on them.

\begin{algorithm}[t]
	\vspace{0mm}
	\renewcommand{\algorithmicrequire}{\textbf{Input:}}
	\renewcommand{\algorithmicensure}{\textbf{Output:}}
	\caption{The proposed learning Algorithm}
	\label{alg1}
	\begin{algorithmic}[1]  \small
		\REQUIRE  Training data $\widetilde{D}$, meta data $D_{meta}$, batch size $n,m$, max iterations $Iter$.
		\ENSURE  Classifier $f$ parameter $\mathbf{W}$, noise transition matrix $T$.
		\STATE Initialize classifier parameter $\mathbf{W}^{(0)}$ and noise transition matrix parameter  $T^{(0)}$.
		\FOR{$t=0$ {\bfseries to} $Iter-1$}
		\STATE $\{x,y\} \leftarrow$ SampleMiniBatch($\widetilde{D},n$).
		\STATE $\{x^{(m)},y^{(m)}\} \leftarrow$ SampleMiniBatch($D_{meta},m$).
		\STATE Update $T^{(t)}$ by Eq. (\ref{theta}).
		\STATE Update $\mathbf{W}^{(t)}$ by Eq. (\ref{wt}).
		\ENDFOR
	\end{algorithmic}
\end{algorithm}

\subsection{Meta Transition Adaptation Method}
To alleviate the aforementioned issues of the current methods, we propose a new learning strategy, which utilizes a small set of meta data with clean labels to guide the estimation of the noise transition matrix.
Specifically, we leverage a small set of meta data set  $D_{meta}=\{(x_i^{(m)},y_i^{(m)})\}_{i=1}^M$ with clean labels, representing the meta-knowledge of underlying label distribution of clean samples, where $M$ is the number of meta-samples, and $M \ll N$. Note that the data can always be attainable in practice as compared with infeasible anchor point priors and large collection of clean samples required in traditional DL methods. Then we formulate the following bi-level minimization problem to jointly estimate the noise transition matrix and learn the classifier parameters:
\begin{align}\label{eqfw}
T^* = \mathop{\arg\min}_{T\in [0,1]^{c\times c}} \frac{1}{M} \sum_{i=1}^M \mathcal{L}_{M} (f^*_T(x_i^{(m)}),y_i^{(m)}),\\ \label{eqthe}
f^*_T=
\mathop{\arg\min}_{f\in \mathcal{F}} \frac{1}{N}\sum_{i=1}^N \ell (T^Tf(x_i),\widetilde{y}_i),
\end{align}
where $\mathcal{F}$ and $\mathcal{L}_{M}$ denote the hypothesis space of $f$ and the loss function imposed on meta data, respectively. $f^*_T$ represents the optimal classifier that minimizes Eq.(\ref{eqthe}) on the noisy dataset $\widetilde{D}$ while depends on $T$ ($f_T$ is the functional operator with parameter $T$). We use cross-entropy (CE) loss as training and meta loss in all our experiments. Note that we treat $T$ as training hyper-parameter, and the estimation of it should minimize the loss on meta data in a meta-learning manner \cite{finn2017model,shu2019meta}.

We have further proved that our method can recover the ground-truth noise transition matrix with meta loss in probability under some mild conditions, and our method is thus with statistical consistency property. All theoretical results and proof details are listed in supplementary material.

\vspace{-1mm}\subsection{Generalization Error}\vspace{-1mm}
We then show an upper bound for the estimation error supposed that we obtain the ground-truth noise transition matrix by using Rademacher complexity \cite{mohri2018foundations}.
\begin{theorem}\label{TH3}
Let $\mathcal{H}$ be the class of real-valued networks of depth $d$ over the domain $\mathcal{X}=\{x: \|x\|\leq B\}$, where each parameter matrix $W^{(i)}$ is with Frobenius norm at most $M_i$, and the activation function $\sigma$ is 1-Lipschitz, positive-homogeneous and applied element-wise (such as the ReLU).  Suppose the loss function be the CE loss, and then for any $\delta \in (0,1)$, with the probability at least $1-\delta$, it holds that:\vspace{-7mm}

\small
	\begin{align*}
	\overline{R}(f)\leq& \overline{R}_N(f)+  \frac{2cMB\left(\sqrt{2\log(2)d}+1\right)\prod\limits_{i=1}^d M_i}{\sqrt{N}}\\
	&  + 3M \sqrt{\frac{\log 2/\delta}{2N}}.
	\end{align*}
\end{theorem}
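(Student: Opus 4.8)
The plan is to control the uniform deviation $\sup_{f\in\mathcal F}\big(\overline R(f)-\overline R_N(f)\big)$ by the classical Rademacher-complexity route, and then to strip off, one layer at a time, the cross-entropy loss, the transition factor $T^{T}$ and the softmax, leaving only the Rademacher complexity of the bare depth-$d$ network class $\mathcal H$, for which a size-independent bound is available. Concretely, consider the induced loss class $\mathcal G=\{(x,\widetilde y)\mapsto \ell(T^{T}f(x),\widetilde y):f\in\mathcal F\}$ and let $M$ denote the relevant uniform bound (equivalently, Lipschitz-type constant) of the cross-entropy layer on the working domain; this is finite because the softmax outputs — hence their convex combinations $T^{T}f(x)$ — can be taken bounded away from $0$, which is exactly what makes CE a legitimate bounded loss. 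Applying the standard empirical-Rademacher generalization bound (two applications of McDiarmid's inequality plus symmetrization, cf.\ \citet{mohri2018foundations}, Thm.~3.3, rescaled to loss range $[0,M]$) yields, with probability at least $1-\delta$,
\begin{align*}
\overline R(f)\le \overline R_N(f)+2\,\mathfrak R_N(\mathcal G)+3M\sqrt{\tfrac{\log(2/\delta)}{2N}},
\end{align*}
which already produces the last term of the statement.

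Next I would bound $\mathfrak R_N(\mathcal G)$ by a contraction argument. The composite scalar map $z\mapsto \ell(T^{T}\mathrm{softmax}(z),\widetilde y)=-\log\!\big([T^{T}\mathrm{softmax}(z)]_{\widetilde y}\big)$ is Lipschitz in $z$ on the sublevel set $\{\ell\le M\}$: by the chain rule its gradient equals $J_{\mathrm{softmax}}(z)^{T}T\big(-e_{\widetilde y}/[T^{T}\mathrm{softmax}(z)]_{\widetilde y}\big)$, and one controls $\|J_{\mathrm{softmax}}(z)\|_{2}\le 1$, $\|T\|_{2}\le\sqrt c$ (since $T$ is row-stochastic, so $\|T\|_{\infty}=1$ and $\|T\|_{1}\le c$), and the denominator from below on $\{\ell\le M\}$; the resulting $\ell_2$-Lipschitz constant, together with the $c$ output coordinates, is absorbed into a factor of order $cM$. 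Invoking Maurer's vector-contraction inequality for Rademacher complexities then gives $\mathfrak R_N(\mathcal G)\le cM\,\mathfrak R_N(\mathcal H)$, where $\mathcal H$ is the class of scalar depth-$d$ network outputs from which the pre-softmax vector $\bm h$ is built.

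Finally, for such networks with element-wise $1$-Lipschitz positive-homogeneous activations (e.g.\ ReLU), per-layer Frobenius norm at most $M_i$, and inputs with $\|x\|\le B$, the size-independent bound of Golowich, Rakhlin and Shamir gives $\mathfrak R_N(\mathcal H)\le \dfrac{B\big(\sqrt{2\log(2)\,d}+1\big)\prod_{i=1}^{d}M_i}{\sqrt N}$. Substituting this into the contraction estimate and then into the uniform-convergence bound produces the middle term $\dfrac{2cMB\big(\sqrt{2\log(2)d}+1\big)\prod_{i=1}^{d}M_i}{\sqrt N}$, completing the proof.

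The main obstacle is the peeling step: cross-entropy is not globally Lipschitz, since $-\log t\to\infty$ as $t\to0^{+}$, so one genuinely must restrict to the region where the predicted probability $[T^{T}f(x)]_{\widetilde y}$ is bounded below, and one must check that the interposed matrix $T^{T}$ does not destroy this — using that $T^{T}$ is column-stochastic and hence maps the probability simplex into itself. Without $T^{T}$ this is clean ($-\log\circ\,\mathrm{softmax}$ has gradient $\mathrm{softmax}(z)-e_{\widetilde y}$, of $\ell_2$-norm $\le 2$); pinning down a clean constant once $T^{T}$ is present, and relating the vector-valued Rademacher complexity of $\mathcal H$ across its $c$ coordinates without losing more than the stated $c$ factor, is where the care is needed, while the remaining estimates are standard.
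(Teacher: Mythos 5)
Your overall architecture is the same as the paper's: the Mohri-et-al.\ uniform-convergence bound with loss range $[0,M]$ gives the first and last terms, a contraction argument peels the loss down to the network class $\mathcal H$, and the size-independent bound of Golowich--Rakhlin--Shamir supplies the $\frac{B(\sqrt{2\log(2)d}+1)\prod_i M_i}{\sqrt N}$ factor. The gap is precisely in the step you yourself flag as ``where the care is needed'': your Lipschitz accounting for $z\mapsto -\log\big([T^{T}\mathrm{softmax}(z)]_{\widetilde y}\big)$ does not deliver the stated constant. Restricting to the sublevel set $\{\ell\le M\}$ only bounds the denominator below by $e^{-M}$, so the chain-rule estimate you sketch yields a Lipschitz constant of order $\sqrt{c}\,e^{M}$, not ``order $cM$''; the $M$ in the theorem comes from the loss range in the generalization bound, not from the contraction. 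Moreover, restricting to a sublevel set is not legitimate inside a bound that must hold uniformly over $f\in\mathcal F$ (the Rademacher complexity takes a supremum over the whole class, including functions violating $\ell\le M$), and Maurer's vector contraction would additionally cost a $\sqrt2$ and still sum over the $c$ coordinate classes, so it is not clear you land on the clean factor $c$.

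The missing idea — and the paper's key lemma — is that no sublevel-set restriction is needed at all: differentiating the full composite with respect to the logits gives, for each $j$,
\begin{align*}
\frac{\partial}{\partial h_j}\,\ell\big(T^{T}f(X),\widetilde Y=i\big)
= -\frac{T_{ji}\exp(h_j)}{\sum_{k}T_{ki}\exp(h_k)}+\frac{\exp(h_j)}{\sum_{l}\exp(h_l)},
\end{align*}
and both terms lie in $[0,1]$ (the first because $T_{ji}\exp(h_j)\le\sum_k T_{ki}\exp(h_k)$ termwise, the second because softmax outputs are probabilities), so the loss is globally $1$-Lipschitz in each logit coordinate. The singularity of $-\log$ at $0$ is exactly cancelled by the vanishing gradient of the softmax, and the column-stochastic $T^{T}$ preserves this. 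One scalar Talagrand contraction per coordinate, summed over the $c$ coordinates, then gives $\hat{\mathfrak R}(\hat\ell\circ\mathcal F)\le c\,\hat{\mathfrak R}(\mathcal H)$ with no extraneous $e^{M}$, $\sqrt c$, or $\sqrt 2$ factors, and the stated bound follows. You essentially note this clean computation for the case without $T^{T}$; carrying it through with $T^{T}$ present is the one calculation your proposal is missing.
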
\normalsize\vspace{-2mm}

The proof is presented in the supplementary file. As we can see, although we append an extra noise transition adapting element compared with traditional CE loss, the derived generalization error bound is not larger than those derived from the algorithms employing the CE loss, implying that learning with transition matrix does not need extra larger training samples to achieve a good generalization result.

\vspace{-1mm}
\subsection{Algorithm for Estimating $T$}\vspace{-1mm}
Estimation of the optimal $T^*$ and $f^*$ requires two nested loops of optimization (Eq.(\ref{eqfw})(\ref{eqthe})), which is expensive to obtain the exact solution \cite{franceschi2018bilevel}. We thus employ SGD technique, as conventional DNN implementations, to approximately solve our problem in a mini-batch updating manner
\cite{finn2017model,shu2019meta} to jointly ameliorating noise transition $T$ and classifier parameter $\mathbf{W}$ in the DNN classifier $f$.

\textbf{Estimating $T$.} At iteration step $t$,  we firstly adjust the noise transition matrix $T^{(t)}$ according to the classifier parameters $\mathbf{W}^{(t-1)}$ and noise transition matrix $T^{(t-1)}$ obtained in the last step by minimizing the meta loss defined in Eq.(\ref{eqfw}). SGD is employed to optimize the meta loss on a mini-batch containing $m$ meta samples, i.e.,\vspace{-4mm}

\small
\begin{align}\label{theta}
\begin{split}
T^{(t)} = &T^{(t-1)}-\beta \frac{1}{m}\sum_{i=1}^m\\
& \nabla_{T}\mathcal{L}_{M} \left(f(x_i^{(m)};\hat{\mathbf{W}}^{(t)}(T)),y_i^{(m)}\right)\Big|_{T^{(t-1)}},
\end{split}
\end{align}\normalsize

where the following equation is used to formulate $\hat{\mathbf{W}}^{(t)}(T)$ on a mini-batch data containing $n$ training samples,\vspace{-6mm}
\small
\begin{align}
\begin{split}
\hat{\mathbf{W}}^{(t)}(T) = &\mathbf{W}^{(t-1)}-\alpha \frac{1}{n}\sum_{i=1}^n \\
&\nabla_{\mathbf{W}} \ell \left(T^Tf(x_i;\mathbf{W}^{(t-1)}),\widetilde{y}_i\right)\Big|_{\mathbf{W}^{(t-1)}}.
\end{split}
\end{align}\normalsize\vspace{-3mm}

The above learning process is inspired by MAML \cite{finn2017model}, and $\alpha,\beta$ represent the step sizes.

\textbf{Updating $\mathbf{W}$.} When obtained the noise transition matrix $T^{(t)}$, the classifier parameters $\mathbf{W}^{(t)}$ can then be updated by:\vspace{-6mm}

\small
\begin{align}\label{wt}
\begin{split}
\mathbf{W}^{(t)} = &\mathbf{W}^{(t-1)}-\alpha \frac{1}{n}\sum_{i=1}^n \\
&\nabla_{\mathbf{W}} \ell \left({T^{(t)}}^Tf(x_i;\mathbf{W}^{(t-1)}),\widetilde{y}_i\right)\Big|_{\mathbf{W}^{(t-1)}}.
\end{split}
\end{align}\normalsize\vspace{-3mm}

The Meta Transition Adaptation learning algorithm can then be summarized in Algorithm \ref{alg1}. All computations of gradients can be efficiently implemented by automatic differentiation
techniques and easily generalized to any deep learning architectures. The algorithm can be easily implemented using popular deep learning frameworks like PyTorch \cite{paszke2019pytorch}.
It can be seen that both the classifier and the noise transition matrix can be gradually ameliorated during the learning process based on their values calculated in the last step, and the noise transition matrix can thus be updated in a stable manner.

\begin{table*}[t] \vspace{-4mm}
	\caption{Test accuracy (\%) of all competing methods on CIFAR-10 and CIFAR-100 under symmetric and asymmetric noise with different noise levels. The best results are highlighted in \textbf{bold}.}\label{table1} \vspace{1mm}
	\centering
	\setlength{\tabcolsep}{1.5mm}
	\begin{scriptsize}
		\begin{tabular}{c|c|c|c|c|c|c|c|c|c|c}
			\toprule
			\multirow{3}{*}{Datasets}              &  \multirow{3}{*}{Methods}    &\multicolumn{5}{c|}{Symmetric Noise}   & \multicolumn{4}{c}{Asymmetric Noise}    \\
			\cline{3-11}
			&            &  \multicolumn{5}{c|}{Noise Rate $\eta$}         & \multicolumn{4}{c}{Noise Rate $\eta$}   \\ \cline{3-11}
			&            & 0               &             0.2 &           0.4    &             0.6 &            0.8  & 0.2  &0.4  &0.6  &0.8 \\ \hline \hline
			\multirow{9}{*}{CIFAR-10}&     CE     &94.16$\pm$0.25   & 86.38$\pm$0.99 & 77.52$\pm$0.41 & 73.63$\pm$0.85  & 50.31$\pm$2.14  & 83.60$\pm$0.24 & 77.85$\pm$0.98 & 69.69$\pm$0.72 & 55.20$\pm$0.28\\
			&Fine-tuning    &94.40$\pm$0.14   &87.47$\pm$0.80  &   82.23$\pm$0.44 &78.10$\pm$0.59  & 51.44$\pm$3.86& 92.09$\pm$0.14 & 89.96$\pm$0.24 &75.61$\pm$2.91&60.29$\pm$1.46\\
			&  GCE       &  91.73$\pm$0.14 &  89.99$\pm$0.16 &  87.31$\pm$0.53   &  82.15$\pm$0.47  & 57.36$\pm$2.08 &  89.75$\pm$1.53 &87.75$\pm$0.36  & 67.21$\pm$3.64 & 57.46$\pm$0.31 \\
			&     Forward&94.33$\pm$0.31   & 88.26$\pm$0.22 & 83.23 $\pm$0.56 & 78.19$\pm$1.12    & 61.66 $\pm$3.54 & 91.34$\pm$0.28 & 89.87$\pm$0.61 &87.24$\pm$0.96  &81.07$\pm$1.92 \\
			&  GLC       &  94.43$\pm$0.27 & 90.06$\pm$0.30 & 86.78$\pm$0.45   & 82.52$\pm$0.76   &   62.40$\pm$0.14 & 92.87$\pm$0.16 &91.80$\pm$0.24  & 90.95$\pm$0.06 & 90.02$\pm$0.60 \\  
			&  S-Model&94.39$\pm$0.46  & 90.21$\pm$0.14   &  87.92$\pm$2.01    & 81.99$\pm$0.21    &  57.08$\pm$0.23  &    90.86$\pm$0.15        & 84.87$\pm$0.27         & 67.89$\pm$0.46          &  56.17$\pm$1.24         \\
			&T-Revision  &93.86$\pm$0.11   & 90.66$\pm$0.12& 87.88$\pm$0.23   & 83.45$\pm$0.68   & 57.94$\pm$1.56 & 92.48$\pm$0.28 & 91.76$\pm$0.12 & 89.20$\pm$0.69 &84.04$\pm$1.13\\ 
			&MW-Net&93.90$\pm$0.15  & 90.90$\pm$0.66   & 87.02$\pm$0.86  & 82.98$\pm$0.30   & 65.43$\pm$1.51& 92.69$\pm$0.24 &90.17$\pm$0.11  & 68.55$\pm$0.76 & 58.29$\pm$1.33\\
			&Ours    &  \textbf{94.65$\pm$0.03  }        &\textbf{92.54$\pm$0.17}  & \textbf{89.73$\pm$0.41}   & \textbf{ 85.97$\pm$0.10} & \textbf{72.41$\pm$0.32}   & \textbf{93.65$\pm$0.05}   & \textbf{ 93.17$\pm$0.13}  & \textbf{92.57$\pm$0.18}  & \textbf{91.57$\pm$0.28}     \\  \hline \hline
			\multirow{9}{*}{CIFAR-100}& CE        &   76.10$\pm$0.24 &60.38$\pm$0.75   &46.92$\pm$0.51  & 31.82$\pm$1.16  & 8.29$\pm$3.24    &61.05$\pm$0.11  &50.30$\pm$1.11  &37.34$\pm$1.80  & 12.46$\pm$0.43\\
			&Fine-tuning   &76.74$\pm$0.26&64.45$\pm$0.43  &52.69$\pm$1.35 & 38.52$\pm$1.05 &18.95$\pm$0.44 &65.35$\pm$0.80 & 53.11$\pm$0.64 &41.40$\pm$0.43 &19.63$\pm$0.30\\
			&GCE        & 71.97$\pm$0.45 & 68.02$\pm$1.05   &64.18$\pm$0.30  & 54.46$\pm$0.31   & 15.61$\pm$0.97 & 66.15$\pm$0.44 & 56.85$\pm$0.72 & 40.58$\pm$0.47 & 15.82$\pm$0.63  \\
			&Forward    &   76.45$\pm$0.03& 63.71$\pm$0.49   &49.34$\pm$0.60  & 37.90$\pm$0.76  &9.57$\pm$1.01&  64.97$\pm$0.47           &52.37$\pm$0.71  & 44.58$\pm$0.60 & 15.84$\pm$0.62\\
			&GLC        & 76.55$\pm$0.07  & 66.30$\pm$0.62 &59.25$\pm$0.69   &50.86$\pm$0.57 &15.07$\pm$0.78 & 70.83$\pm$0.25 &66.47$\pm$0.58  &54.82$\pm$0.99  & 28.18$\pm$1.88\\
			&S-Model     & 73.69$\pm$0.18& 64.61$\pm$0.95  &60.36$\pm$0.45   & 35.88$\pm$4.47  &7.61$\pm$0.82&66.64$\pm$0.44  & 52.26$\pm$0.17     & 42.96$\pm$0.18   &14.95$\pm$0.60 \\
			& T-Revision &76.12$\pm$0.26  &  68.52$\pm$0.52       &   61.56$\pm$0.37        & 42.48$\pm$0.13          &  7.66$\pm$0.25        &  69.57$\pm$0.12   & 61.80$\pm$0.41     & 44.54$\pm$1.62        &   17.10$\pm$0.22        \\
						&MW-Net&74.93$\pm$0.42 &69.95$\pm$0.40   &65.45$\pm$0.45   & 55.42$\pm$1.36  &21.37$\pm$0.56& 66.73$\pm$0.34 &59.53$\pm$0.40  &52.24$\pm$0.95  & 17.41$\pm$0.52\\
			&Ours & \textbf{76.75$\pm$0.09} &  \textbf{72.58$\pm$0.13}  &  \textbf{68.77$\pm$0.17}     & \textbf{57.85$\pm$0.51}    & \textbf{21.78$\pm$0.42}  & \textbf{74.74$\pm$0.08  } & \textbf{71.58$\pm$0.15}         &    \textbf{61.16$\pm$0.43}     &  \textbf{33.31$\pm$0.78  }      \\
			\bottomrule
		\end{tabular} \vspace{-5mm}
	\end{scriptsize}
\end{table*}

\vspace{-2mm}
\section{Experimental Results} \label{experiment}\vspace{-2mm}
To evaluate the capability of the proposed algorithm, we implement simulated experiments on CIFAR-10, CIFAR-100, TinyImageNet, as well as a large-scale real-world noisy dataset Clothing1M.
\vspace{-2mm}\subsection{Experimental Setup}\vspace{-1mm}
\textbf{Datasets.} We first verify the effectiveness of our method on two benchmark datasets: CIFAR-10 and CIFAR-100 \cite{krizhevsky2009learning}, consisting of $32\times32$ color images arranged in 10 and 100 classes, respectively. Both datasets contain 50,000 training and 10,000 test images. We randomly select 1,000 clean images in the validation set as meta data. We also verify our method on a larger and harder dataset called Tiny-ImageNet (T-ImageNet briefly), containing 200 classes with 100K training, 10K validation, 10K test images of $64\times64$. We randomly sample 10 clean images per class as meta data. These datasets are popularly used for evaluating learning with noisy labels in previous literatures \cite{patrini2017making,goldberger2016training}.

\textbf{Noise setting.} We test two types of label noises: symmetric and asymmetric (class-dependent) noise. \textbf{Symmetric} label noises are generated by flipping the labels of a given proportion of training samples to one of the other class labels uniformly \cite{zhang2016understanding}. Under \textbf{asymmetric} noises, for CIFAR-10, we use the setting in \cite{yao2019safeguarded}. Concretely, we set a probability $r$ to disturb the label to its similar class, i.e., truck $\rightarrow$ automobile, bird $\rightarrow$ airplane, deer $\rightarrow$ horse, cat $\rightarrow$ dog. For CIFAR-100, a similar $r$ is set but the label flip only happens in each super-class as described in \cite{hendrycks2018using}. For T-ImagNet, we adopt the noise setting in \cite{yu2019does}, where labelers also make mistakes only within very similar classes. The graph illustration of asymmetric noise about CIFAR-10 and T-ImageNet can be found in supplementary file.

\begin{table}\vspace{-2mm}
	\caption{Test accuracy (\%) on T-ImageNet under symmetric and asymmetric noise. The best results are in \textbf{bold}. }\label{tableT} \vspace{1mm}
	\centering
	\setlength{\tabcolsep}{1.5mm}
	\begin{scriptsize}
		\begin{tabular}{c|c|c|c|c|c|c|c}
			\toprule
			\multirow{3}{*}{Methods}    &\multicolumn{4}{c|}{Symmetric Noise}   & \multicolumn{3}{c}{Asymmetric Noise} \\
			\cline{2-8}
			&  \multicolumn{4}{c|}{Noise Rate $\eta$}         & \multicolumn{3}{c}{Noise Rate $\eta$}   \\
			\cline{2-8}
			&        0            &        0.2    &       0.4         &     0.6             &     0.2      &       0.4           &   0.6     \\	
			\hline
			\hline
			CE  &     54.10 & 43.94   & 35.14     &  16.45     & 45.83    &  34.95          & 16.24     \\
			Fine-tuning &54.52  & 45.69    &38.06   &  16.60   &  48.57 & 37.17  &18.79\\
			GCE&   50.20      & 46.77    & 41.27  &   19.38   &  47.05   &  34.24    & 14.85    \\
			Forward  &   54.17   & 46.40   & 37.11     &  24.98     & 49.08    &  37.71     &  19.90      \\
			GLC &  54.28      &   48.71  &  42.46     &  25.50     &  49.66  & 40.57   & 31.19    \\
			S-Model &   54.32   & 46.88   & 37.12     &   22.81    &   47.01      &   32.94     &   16.70      \\
			T-Revision&  51.79   & 41.70        &37.04    &  26.44      &   49.63  & 35.02          &   18.87    \\
			MW-Net &   53.58   &48.31   & 43.33      & \textbf{32.23}      &  50.14     &   35.68     &  18.97       \\		
			Ours&      \textbf{54.54 }      &  \textbf{49.85}       & \textbf{43.35}      &     29.22         &       \textbf{51.12}        &   \textbf{43.51 }       &   \textbf{36.32}           \\
			\bottomrule
		\end{tabular}\vspace{-5mm}
	\end{scriptsize}
\end{table}

\textbf{Baselines.} The compared methods include: 1) \textbf{CE}, which uses CE loss to train the DNNs on noisy datasets. 2) \textbf{Fine-tuning}, which finetunes the result of CE on the meta-data to further enhance its performance; 3) \textbf{GCE} \cite{zhang2018generalized}, which employs a robust loss combining the benefits of both CE loss and mean absolute error loss against label noise. 4) \textbf{Forward} \cite{patrini2017making},  which estimates the noise transition matrix in an unsupervised manner.  5) \textbf{GLC} \cite{hendrycks2018using}, which estimates the noise transition matrix by using a small set clean label dataset. 6) \textbf{S-Model }\cite{goldberger2016training}, which uses a Softmax layer to model the noise transition matrix. 7) \textbf{T-Revision} \cite{xia2019anchor}, which learns the noise transition matrix by adding a slack variable to adjust the initialized matrix. 8)\textbf{MW-Net} \cite{shu2019meta}, which uses a MLP net to learn the weighting function in a data-driven fashion. The meta-data in these methods are used as validation set except for Fine-tuning and MW-Net. Note that above 4\&5, 6\&7, 8 methods represent the SOTA one-stage and two-stage noise transition estimation methods, and the SOTA meta-learning method for solving robust DL issue on noisy samples.

\textbf{Network structure.} We use ResNet-34 \cite{he2016deep} as our classifier network for CIFAR-10 and CIFAR-100 dataset followed by \cite{patrini2017making,xia2019anchor}, and a 18-layer Preact ResNet \cite{he2016deep} for T-ImageNet.

\textbf{Experimental setup.} We train the models with SGD, at an initial learning rate $0.0001$ and a momentum 0.9, a weight decay $1\times10^{-3}$ with mini-batch size 128.
The learning rate decays 0.1 at 80 and 100 epochs for a total of 120 epochs. We initialize the softmax parameters of our algorithm with the estimation results of GLC.
\vspace{-3mm}\subsection{Evaluation on Robustness Performance}\vspace{-2mm}
\textbf{Results on CIFAR-10 and CIFAR-100.} The classification accuracies of CIFAR-10 and CIFAR-100 under symmetric and asymmetric noise are reported in Table \ref{table1} with 5 random runs.  As can be seen, our proposed algorithm achieves the best performance in all cases except for CIFAR-100 80\%  symmetric noise. Specifically, even with large noise ratio, our algorithm still shows the competitive classification accuracy. For example, when $\eta=0.8$ on CIFAR-10 symmetric noise and $\eta=0.6$ on CIFAR-100 asymmetric noise, our algorithm reaches 72.41\% and 61.16\%, outperforming the best results of baselines by about 10\% and 6\%, respectively. This demonstrates the robustness of our method on different types and portions of noise.

From Table \ref{table1} it can be found that: 1) Our algorithm evidently improves the performance of Forward and GLC especially in large noise cases, possibly conducted by the inaccurate pre-assumed anchor points, which should be infeasible in real cases. Comparatively, our algorithm can dynamically adjust the transition matrix to make its estimation gradually ameliorated guided by meta data, though our method has a initialization result of GLC. 2) S-model behaves well when noise ratio is small, while degrades quickly when noise ratio becomes large, as well as T-Revision does. This can be explained by the fact that large noise makes it easy to fall into a wrong estimation, as illustrated in Section \ref{under} and Table.\ref{tableM}. Though sharing the same initializations with them, our method can avoid to fall into a wrong estimation and still perform well through being guided by meta data to avoid being trapped by noisy  samples. Especially, when $\eta=0.8$ on CIFAR-100 symmetric noise, both of them underperform the CE methods, while our method achieves a pretty improvement. 3) MW-Net produces a competitive result under the symmetric noise compared with our algorithm. However, it degrades the performance quickly under the asymmetric noise, since for this method, all classes share one weighting function, which is unreasonable when noise is asymmetric. Instead, our method can adaptively fit different noise types and noise rates and gradually ameliorate the estimation. 4) It is interesting to see that our method performs better than CE and fine-tuning even under no-noise scenarios. We will discuss this phenomenon in the next section.

\begin{table}\vspace{-3mm}
	\caption{Comparison of estimation error for noise transition matrix under the asymmetric noise experiments on CIFAR-10 and CIFAR-100 learned by Forward, S-Model, GLC, T-Revision and our method, respectively. S-Model, T-Revision and our method share the same initialized values and the reported results are calculated using matrices learned at last epoch. The estimation error for the matrix is calculated by $\|T-\hat{T}\|_1 /\|T\|_1$, where $T$ and $\hat{T}$ denote the ground-truth and estimated matrices, respectively. }\label{tableM} \vspace{1mm}
	\centering
	\setlength{\tabcolsep}{1.5mm}
	\begin{scriptsize}
		\begin{tabular}{c|c|c|c|c|c|c|c|c}
			\toprule
			\multirow{3}{*}{Methods}    &\multicolumn{4}{c|}{CIFAR-10}   & \multicolumn{4}{c}{CIFAR-100} \\
			\cline{2-9}
			&  \multicolumn{4}{c|}{Noise Rate $\eta$}         & \multicolumn{4}{c}{Noise Rate $\eta$}   \\
			\cline{2-9}
			&        0.2          &        0.4    &       0.6         &     0.8            &     0.2      &       0.4           &   0.6   &0.8  \\	
			\hline
			\hline
			Forward  &   0.163   & 0.197 & 0.209    &  0.342   & 0.446   & 0.701 &  0.727  & 1.691 \\
			GLC&   0.051   &  0.093    & 0.163 &  0.206  &  0.251 &  0.515  &  0.563  & 0.676\\
			S-Model &   0.233   &  0.278   & 0.297   &   0.363  &   1.071  & 1.355    &   1.539   & 1.806\\
			T-Revision &  0.081   & 0.120 & 0.195    &  0.265     &  0.346 & 0.795&1.257 &  1.699 \\
			Ours & \textbf{ 0.046}  &\textbf{0.058}  & \textbf{0.068 }     & \textbf{0.097 }  &  \textbf{ 0.188} &  \textbf{0.273 }   & \textbf{0.297}   & \textbf{0.323}   \\
			\bottomrule
		\end{tabular}\vspace{-5mm}
	\end{scriptsize}
\end{table}

\textbf{Results on  T-ImageNet.} To verify our method on more complex scenario, we summarize in Table \ref{tableT} the test accuracy on T-ImageNet with different noise settings. As we can see, similar to the CIFAR experiments, for both noise settings with different noise rates, our algorithm outperforms all other baselines except for 60\% symmetric noise, where MW-Net beats our algorithm, where all methods have actually lost efficacy. But when the MW-Net is used in more complicated asymmetric noise case with the same noise extent, the method is largely degenerated, where our method can still perform consistently well. The robustness of our method can thus be further substantiated.

\vspace{-2mm}\subsection{How noise transition matrix adapt}\vspace{-2mm}
To understand how our algorithm automatically adjust noise transition matrix guided by the meta data, Table.\ref{tableM} summarizes the estimation error for the transition matrix of the compared methods and ours. It can be observed that our method is more efficient in estimating the transition matrix. Specifically, the matrices learned by Forward and GLC are worse than ours, since the anchor points they find are likely to be inexact, and our method can improve the inexact estimation of GLC towards the groud-truth solution guided by the meta data. On the other hand, although shared the same initialized values with ours, matrices learned by S-Model are easier to fall into a bad estimation when noise ratio increases, leading to poor performance compared with ours. T-revision is also towards bad direction, while the deterioration is slowed down with the control of the revision. Besides, T-Revision deteriorates faster on CIFAR-100 than on CIFAR-10.
Therefore, the estimating matrices by our method are more accurate, naturally following its more robust performance than compared methods.
\begin{table}[t] \vspace{-3mm}
	\caption{Test accuracy (\%) of different models on real-world noisy dataset Clothing1M. The best results are in \textbf{bold}.}\label{Table4} \vspace{1mm}
	\centering
	\setlength{\tabcolsep}{0.8mm}
	\begin{scriptsize}
		\begin{tabular}{c|c|c|c|c|c|c|c|c}
			\toprule
			Methods & CE    &GCE& Forward  & GLC  &  S-Model &    T-Revision   & MW-Net & Ours\\   \hline
			Accuracy & 68.94 & 69.75 &   70.83 & 74.26  &  70.36 &        74.18  &   73.72                &       \textbf{75.59}        \\
			\bottomrule
		\end{tabular} \vspace{-6mm}
	\end{scriptsize}
\end{table}

\vspace{-1mm}\subsection{Experiments on Real-world Noisy Dataset} \label{real-experiment}\vspace{-1mm}
We then verify the applicability of our algorithm on a real-world large-scale noisy dataset: Clothing1M \cite{xiao2015learning}, which contains 1 million images of clothing from online shopping websites with 14 classes, e.g., T-shirt, Shirt, Knitwear. The labels are generated by the surrounding text of images and are thus extremely noisy. The dataset also provides 50k, 14k, 10k manually refined clean data for training, validation and testing, respectively, but we did not use the 50k clean data and use the validation dataset as the meta dataset. Following the previous works \cite{patrini2017making,tanaka2018joint}, we used ResNet-50 pre-trained on ImageNet.  For preprocessing, we resize the image to $256\times256$, crop the center $224\times224$ as input, and perform normalization.  We train the model using SGD with a momentum 0.9, a weight decay $10^{-3}$, an initial learning rate 0.0001, and batch size 100. The learning rate is divided by 10 after 5 epochs (for a total 10 epochs).

The results are summarized in Table \ref{Table4} in terms of top-1 accuracy. Our method outperfoms all baselines. Fig. \ref{fig1} shows some examples of top-5 predictions produced by CE and our method. It can be seen that the top-1 prediction of CE method overfits to the noisy annotations (red labels), while the second top prediction implies the latent clean labels (green labels), reflecting the ambiguity of the sample labels of this dataset. Comparatively, our method can finely recover the true labels through taking the merit of the learned noise transition matrix. For example, the label of the first row image in Fig.\ref{fig1} should be ``T-shirt'', while the annotated label is ``underwear''. The CE method gives 94.2\% confidence to underwear, which is completely trapped by noisy sample. yet our method generates the label ``T-shirt'' with high confidence suppressing the noisy label ``underwear'' benefited from learned noise transition matrix.
\begin{figure}
	\centering
	\includegraphics[width=1.05\linewidth]{./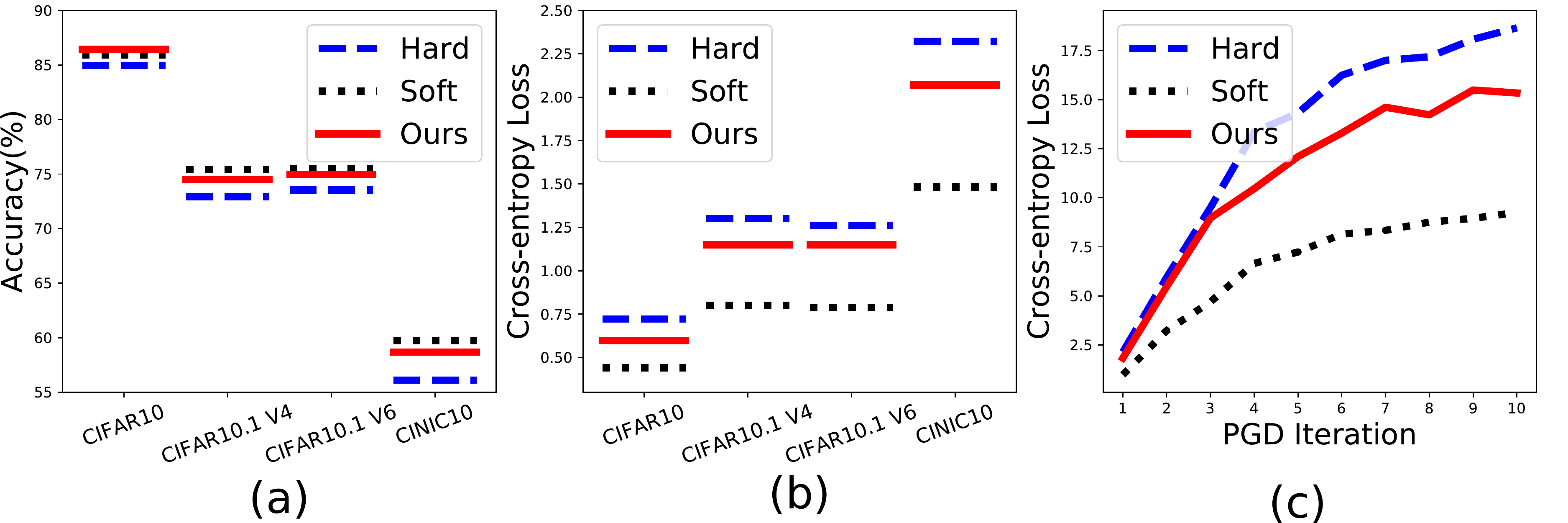}  \vspace{-8mm}
	\caption{Generalization and robustness evaluation results on Hard and Soft labels, as well as ours. (a) accuracy against ground-truth labels, for increasingly out-of-training-sample distributions. (b) CE loss against ground-truth labels. (c) CE loss in PGD iteration.}\vspace{-6mm}
	\label{generalization}
\end{figure}

\vspace{-2mm}

\section{Relation to Label Distribution Learning}\label{ldl}\vspace{-1mm}
It can be observed that our method outperforms CE and Fine-tuning in Table.\ref{table1} and \ref{tableT} even in the no-noise cases, which might be attributed to its intrinsic label distribution learning (LDL) capability \cite{geng2016label,peterson2019human}.
LDL is firstly proposed by \cite{geng2013facial}, which extends the single-label and multi-label annotation to a distribution. \citet{hinton2015distilling} used knowledge distillation to provide the smoothed softmax probabilities to enhance the performance of the student network. To employ soft labels replacing one-hot encoding hard labels, label smoothing \cite{szegedy2016rethinking} and mixup \cite{zhang2017mixup} techniques have also been proposed. Recently, \citet{peterson2019human} presented a full distribution of human labels dataset, CIFAR10H, and utilized it to help improve the accuracy and robustness of a model compared with hard labels.

When there are no noisy labels, our method can be explained to be able to approximate the ground-truth label distribution. Specifically, the hard labels correspond to the most probable label while lose the full label distribution, i.e., including human allocation of probabilities. Therefore, Eq.(\ref{eqthe}) can be interpreted as that the observed data distribution with hard labels is obtained by transforming the underlying data distribution with full label distribution (soft labels) through the transition matrix $T$. The underlying conditional data distribution should behave robust facing unseen data, i.e., to minimize the CE loss over unobserved data (meta data) to bring better generalization and robustness, as validated in \cite{peterson2019human}. Therefore, minimizing Eq.(\ref{eqfw}) can be considered to search $T$ for helping the classifier $f(X)$ recover the underlying conditional data distribution. Therefore, it is rational that our method outperforms CE and Fine-tuning even with less training samples.

Furthermore, to verify that our method can deliver the knowledge of the latent label distribution, we follow the generalization and robustness experiments in \cite{peterson2019human} to compare with Soft and Hard trained with human uncertainty soft labels and one-hot hard labels. The results are demonstrated in Fig. \ref{generalization} and Table \ref{Table5}. For generalization experiment (Section 5 in \cite{peterson2019human}), we train ResNet-110 on 9,900 test images and treat left 100 images randomly chosen 10 images per class as meta data, and evaluate on CIFAR-10 50,000 training set, CIFAR10.1v6,v4 dataset \cite{recht2018cifar} and CINIC10 dataset \cite{darlow2018cinic}. The accuracy of our method is very near to the Soft labels, as seen in Fig. \ref{generalization}(a),  and the CE metric\footnote{The metric is used to evaluate how confident the top prediction of a model is, and whether its distribution over alternative categories is sensible} is evidently better than Hard labels, as seen in Fig. \ref{generalization}(b). These results show our method can improve the generalization of the calculated classifier when test datasets are increasingly out-of-distribution compared with Hard labels.

For robustness experiment, we pretrain ResNet-110 on 49,900 CIFAR-10 training images with treat left 100 images randomly chosen 10 images per class as meta data and then fine-tune pretrained model using 10,000 CIFAR-10 test images. The FGSM attack results \cite{kurakin2016adversarial} are reported in Table \ref{Table5}, averaged over all 10,000 images in CIFAR10 test set. Note that our method obtains higher accuracy and lower CE loss than Hard labels. Fig.\ref{generalization}(c) plots the increase in CE loss for each training scheme conditions on PGD attacks \cite{madry2017towards}. The accuracy was driven to 0\% for Hard labels and ours, and 1\% for Soft labels. However, loss for Hard labels is driven up more rapidly than ours. These results show that our method can also improve the robustness of model compared with Hard labels.
\begin{table}[t]  \vspace{-3mm}
	\caption{Accuracy and cross-entropy loss after FGSM attacks on the networks learned by hard and soft annotations, as well as ours.}\label{Table5} \vspace{1mm}
	\centering
	\setlength{\tabcolsep}{1.5mm}
	\begin{small}
		\begin{tabular}{c|c|c|c|c|c}
			\toprule
			\multicolumn{3}{c|}{Accuracy} & \multicolumn{3}{c}{Cross-entropy}  \\ \hline
			Hard  &  Soft  &  Ours & 		Hard  &  Soft  &  Ours \\  \hline
			26.97& 31.43&\textbf{32.84}  &4.03&\textbf{2.68}  &3.72\\
			\bottomrule
		\end{tabular} \vspace{-4mm}
	\end{small}
\end{table}

\vspace{-1mm}\section{Conclusion}\vspace{-1mm}
We have proposed a novel meta-learning method for adaptively extracting transition matrix to guarantee robust deep learning in the presence of noisy labels. Compared with previous methods that require strong anchor point prior assumption or inaccurate estimation misguided by wrong annotation information, the new method is able to yield a more robust and efficient one guided by a small set of meta data. The statistical consistency guarantee of correctly estimating transition matrix can also be proved. Our empirical results show that the proposed method can behave more robust than the SOTA methods. Besides, we discuss the essential relationship with label distribution learning, and our learning strategy is hopeful to improve the generalization and robustness of the model compared with the standard training on hard labels even under no-noise real scenarios due to the inter-class ambiguity generally existed in real data. In future work, we will try to incorporate priors of the noise structure into transition matrix to further enhance the estimation stability, e.g., assuming sparse transition where corruption only happens in super-classes.


\bibliography{example_paper}

\begin{thebibliography}{64}
\providecommand{\natexlab}[1]{#1}
\providecommand{\url}[1]{\texttt{#1}}
\expandafter\ifx\csname urlstyle\endcsname\relax
  \providecommand{\doi}[1]{doi: #1}\else
  \providecommand{\doi}{doi: \begingroup \urlstyle{rm}\Url}\fi

\bibitem[Arazo et~al.(2019)Arazo, Ortego, Albert, O'Connor, and
  McGuinness]{arazo2019unsupervised}
Arazo, E., Ortego, D., Albert, P., O'Connor, N.~E., and McGuinness, K.
\newblock Unsupervised label noise modeling and loss correction.
\newblock In \emph{ICML}, 2019.

\bibitem[Arora et~al.(2018)Arora, Ge, Neyshabur, and Zhang]{arora2018stronger}
Arora, S., Ge, R., Neyshabur, B., and Zhang, Y.
\newblock Stronger generalization bounds for deep nets via a compression
  approach.
\newblock In \emph{ICML}, 2018.

\bibitem[Bartlett \& Mendelson(2002)Bartlett and
  Mendelson]{bartlett2002rademacher}
Bartlett, P.~L. and Mendelson, S.
\newblock Rademacher and gaussian complexities: Risk bounds and structural
  results.
\newblock \emph{Journal of Machine Learning Research}, 3\penalty0
  (Nov):\penalty0 463--482, 2002.

\bibitem[Bartlett et~al.(2006)Bartlett, Jordan, and
  McAuliffe]{bartlett2006convexity}
Bartlett, P.~L., Jordan, M.~I., and McAuliffe, J.~D.
\newblock Convexity, classification, and risk bounds.
\newblock \emph{Journal of the American Statistical Association}, 101\penalty0
  (473):\penalty0 138--156, 2006.

\bibitem[Bartlett et~al.(2017)Bartlett, Foster, and
  Telgarsky]{bartlett2017spectrally}
Bartlett, P.~L., Foster, D.~J., and Telgarsky, M.~J.
\newblock Spectrally-normalized margin bounds for neural networks.
\newblock In \emph{NeurIPS}, 2017.

\bibitem[Bi et~al.(2014)Bi, Wang, Kwok, and Tu]{bi2014learning}
Bi, W., Wang, L., Kwok, J.~T., and Tu, Z.
\newblock Learning to predict from crowdsourced data.
\newblock In \emph{UAI}, 2014.

\bibitem[Chang et~al.(2017)Chang, Learned-Miller, and
  McCallum]{chang2017active}
Chang, H.-S., Learned-Miller, E., and McCallum, A.
\newblock Active bias: Training more accurate neural networks by emphasizing
  high variance samples.
\newblock In \emph{NeurIPS}, 2017.

\bibitem[Chen et~al.(2019)Chen, Liao, Chen, and Zhang]{chen2019understanding}
Chen, P., Liao, B.~B., Chen, G., and Zhang, S.
\newblock Understanding and utilizing deep neural networks trained with noisy
  labels.
\newblock In \emph{ICML}, 2019.

\bibitem[Darlow et~al.(2018)Darlow, Crowley, Antoniou, and
  Storkey]{darlow2018cinic}
Darlow, L.~N., Crowley, E.~J., Antoniou, A., and Storkey, A.~J.
\newblock Cinic-10 is not imagenet or cifar-10.
\newblock \emph{arXiv preprint arXiv:1810.03505}, 2018.

\bibitem[Finn et~al.(2017)Finn, Abbeel, and Levine]{finn2017model}
Finn, C., Abbeel, P., and Levine, S.
\newblock Model-agnostic meta-learning for fast adaptation of deep networks.
\newblock In \emph{ICML}, 2017.

\bibitem[Franceschi et~al.(2018)Franceschi, Frasconi, Salzo, Grazzi, and
  Pontil]{franceschi2018bilevel}
Franceschi, L., Frasconi, P., Salzo, S., Grazzi, R., and Pontil, M.
\newblock Bilevel programming for hyperparameter optimization and
  meta-learning.
\newblock In \emph{ICML}, 2018.

\bibitem[Geng(2016)]{geng2016label}
Geng, X.
\newblock Label distribution learning.
\newblock \emph{IEEE Transactions on Knowledge and Data Engineering},
  28\penalty0 (7):\penalty0 1734--1748, 2016.

\bibitem[Geng et~al.(2013)Geng, Yin, and Zhou]{geng2013facial}
Geng, X., Yin, C., and Zhou, Z.-H.
\newblock Facial age estimation by learning from label distributions.
\newblock \emph{IEEE transactions on pattern analysis and machine
  intelligence}, 35\penalty0 (10):\penalty0 2401--2412, 2013.

\bibitem[Glorot et~al.(2011)Glorot, Bordes, and Bengio]{glorot2011deep}
Glorot, X., Bordes, A., and Bengio, Y.
\newblock Deep sparse rectifier neural networks.
\newblock In \emph{AISTATS}, 2011.

\bibitem[Goldberger \& Ben-Reuven(2017)Goldberger and
  Ben-Reuven]{goldberger2016training}
Goldberger, J. and Ben-Reuven, E.
\newblock Training deep neural-networks using a noise adaptation layer.
\newblock In \emph{ICLR}, 2017.

\bibitem[Golowich et~al.(2018)Golowich, Rakhlin, and Shamir]{golowich2018size}
Golowich, N., Rakhlin, A., and Shamir, O.
\newblock Size-independent sample complexity of neural networks.
\newblock In \emph{COLT}, 2018.

\bibitem[Han et~al.(2018)Han, Yao, Yu, Niu, Xu, Hu, Tsang, and
  Sugiyama]{han2018co}
Han, B., Yao, Q., Yu, X., Niu, G., Xu, M., Hu, W., Tsang, I., and Sugiyama, M.
\newblock Co-teaching: Robust training of deep neural networks with extremely
  noisy labels.
\newblock In \emph{NeurIPS}, 2018.

\bibitem[He et~al.(2016)He, Zhang, Ren, and Sun]{he2016deep}
He, K., Zhang, X., Ren, S., and Sun, J.
\newblock Deep residual learning for image recognition.
\newblock In \emph{CVPR}, 2016.

\bibitem[Hendrycks et~al.(2018)Hendrycks, Mazeika, Wilson, and
  Gimpel]{hendrycks2018using}
Hendrycks, D., Mazeika, M., Wilson, D., and Gimpel, K.
\newblock Using trusted data to train deep networks on labels corrupted by
  severe noise.
\newblock In \emph{NeurIPS}, 2018.

\bibitem[Hinton et~al.(2015)Hinton, Vinyals, and Dean]{hinton2015distilling}
Hinton, G., Vinyals, O., and Dean, J.
\newblock Distilling the knowledge in a neural network.
\newblock \emph{arXiv preprint arXiv:1503.02531}, 2015.

\bibitem[Jiang et~al.(2014{\natexlab{a}})Jiang, Meng, Mitamura, and
  Hauptmann]{jiang2014easy}
Jiang, L., Meng, D., Mitamura, T., and Hauptmann, A.~G.
\newblock Easy samples first: Self-paced reranking for zero-example multimedia
  search.
\newblock In \emph{ACM MM}, 2014{\natexlab{a}}.

\bibitem[Jiang et~al.(2014{\natexlab{b}})Jiang, Meng, Yu, Lan, Shan, and
  Hauptmann]{jiang2014self}
Jiang, L., Meng, D., Yu, S.-I., Lan, Z., Shan, S., and Hauptmann, A.
\newblock Self-paced learning with diversity.
\newblock In \emph{NeurIPS}, 2014{\natexlab{b}}.

\bibitem[Jiang et~al.(2018)Jiang, Zhou, Leung, Li, and
  Fei-Fei]{jiang2018mentornet}
Jiang, L., Zhou, Z., Leung, T., Li, L.-J., and Fei-Fei, L.
\newblock Mentornet: Learning data-driven curriculum for very deep neural
  networks on corrupted labels.
\newblock In \emph{ICML}, 2018.

\bibitem[Jindal et~al.(2016)Jindal, Nokleby, and Chen]{jindal2016learning}
Jindal, I., Nokleby, M., and Chen, X.
\newblock Learning deep networks from noisy labels with dropout regularization.
\newblock In \emph{ICDM}, 2016.

\bibitem[Krizhevsky(2009)]{krizhevsky2009learning}
Krizhevsky, A.
\newblock Learning multiple layers of features from tiny images.
\newblock Technical report, 2009.

\bibitem[Krizhevsky et~al.(2012)Krizhevsky, Sutskever, and
  Hinton]{krizhevsky2012imagenet}
Krizhevsky, A., Sutskever, I., and Hinton, G.~E.
\newblock Imagenet classification with deep convolutional neural networks.
\newblock In \emph{NeurIPS}, 2012.

\bibitem[Kumar et~al.(2010)Kumar, Packer, and Koller]{kumar2010self}
Kumar, M.~P., Packer, B., and Koller, D.
\newblock Self-paced learning for latent variable models.
\newblock In \emph{NeurIPS}, 2010.

\bibitem[Kurakin et~al.(2016)Kurakin, Goodfellow, and
  Bengio]{kurakin2016adversarial}
Kurakin, A., Goodfellow, I., and Bengio, S.
\newblock Adversarial examples in the physical world.
\newblock In \emph{ICLR}, 2016.

\bibitem[Ledoux \& Talagrand(1991)Ledoux and Talagrand]{ledoux1991probability}
Ledoux, M. and Talagrand, M.
\newblock \emph{Probability in Banach Spaces: Isoperimetry and Processes},
  volume~23.
\newblock Springer Science \& Business Media, 1991.

\bibitem[Li et~al.(2017)Li, Yang, Song, Cao, Luo, and Li]{li2017learning}
Li, Y., Yang, J., Song, Y., Cao, L., Luo, J., and Li, L.-J.
\newblock Learning from noisy labels with distillation.
\newblock In \emph{ICCV}, 2017.

\bibitem[Liang et~al.(2016)Liang, Jiang, Meng, and
  Hauptmann]{liang2016learning}
Liang, J., Jiang, L., Meng, D., and Hauptmann, A.~G.
\newblock Learning to detect concepts from webly-labeled video data.
\newblock In \emph{IJCAI}, 2016.

\bibitem[Madry et~al.(2018)Madry, Makelov, Schmidt, Tsipras, and
  Vladu]{madry2017towards}
Madry, A., Makelov, A., Schmidt, L., Tsipras, D., and Vladu, A.
\newblock Towards deep learning models resistant to adversarial attacks.
\newblock In \emph{ICLR}, 2018.

\bibitem[Malach \& Shalev-Shwartz(2017)Malach and
  Shalev-Shwartz]{malach2017decoupling}
Malach, E. and Shalev-Shwartz, S.
\newblock Decoupling" when to update" from" how to update".
\newblock In \emph{NeurIPS}, 2017.

\bibitem[Meng et~al.(2017)Meng, Zhao, and Jiang]{meng2017theoretical}
Meng, D., Zhao, Q., and Jiang, L.
\newblock A theoretical understanding of self-paced learning.
\newblock \emph{Information Sciences}, 414:\penalty0 319--328, 2017.

\bibitem[Mohri et~al.(2018)Mohri, Rostamizadeh, and
  Talwalkar]{mohri2018foundations}
Mohri, M., Rostamizadeh, A., and Talwalkar, A.
\newblock \emph{Foundations of Machine Learning}.
\newblock MIT Press, 2018.

\bibitem[Natarajan et~al.(2013)Natarajan, Dhillon, Ravikumar, and
  Tewari]{natarajan2013learning}
Natarajan, N., Dhillon, I.~S., Ravikumar, P.~K., and Tewari, A.
\newblock Learning with noisy labels.
\newblock In \emph{NeurIPS}, 2013.

\bibitem[Neyshabur et~al.(2018)Neyshabur, Bhojanapalli, and
  Srebro]{neyshabur2017pac}
Neyshabur, B., Bhojanapalli, S., and Srebro, N.
\newblock A pac-bayesian approach to spectrally-normalized margin bounds for
  neural networks.
\newblock In \emph{ICLR}, 2018.

\bibitem[Paszke et~al.(2019)Paszke, Gross, Massa, Lerer, Bradbury, Chanan,
  Killeen, Lin, Gimelshein, Antiga, et~al.]{paszke2019pytorch}
Paszke, A., Gross, S., Massa, F., Lerer, A., Bradbury, J., Chanan, G., Killeen,
  T., Lin, Z., Gimelshein, N., Antiga, L., et~al.
\newblock Pytorch: An imperative style, high-performance deep learning library.
\newblock In \emph{NeurIPS}, 2019.

\bibitem[Patrini et~al.(2017)Patrini, Rozza, Krishna~Menon, Nock, and
  Qu]{patrini2017making}
Patrini, G., Rozza, A., Krishna~Menon, A., Nock, R., and Qu, L.
\newblock Making deep neural networks robust to label noise: A loss correction
  approach.
\newblock In \emph{CVPR}, 2017.

\bibitem[Peterson et~al.(2019)Peterson, Battleday, Griffiths, and
  Russakovsky]{peterson2019human}
Peterson, J.~C., Battleday, R.~M., Griffiths, T.~L., and Russakovsky, O.
\newblock Human uncertainty makes classification more robust.
\newblock In \emph{ICCV}, 2019.

\bibitem[Recht et~al.(2018)Recht, Roelofs, Schmidt, and
  Shankar]{recht2018cifar}
Recht, B., Roelofs, R., Schmidt, L., and Shankar, V.
\newblock Do cifar-10 classifiers generalize to cifar-10?
\newblock \emph{arXiv preprint arXiv:1806.00451}, 2018.

\bibitem[Ren et~al.(2018)Ren, Zeng, Yang, and Urtasun]{ren2018learning}
Ren, M., Zeng, W., Yang, B., and Urtasun, R.
\newblock Learning to reweight examples for robust deep learning.
\newblock In \emph{ICML}, 2018.

\bibitem[Schmidhuber(1992)]{schmidhuber1992learning}
Schmidhuber, J.
\newblock Learning to control fast-weight memories: An alternative to dynamic
  recurrent networks.
\newblock \emph{Neural Computation}, 4\penalty0 (1):\penalty0 131--139, 1992.

\bibitem[Scott(2015)]{scott2015rate}
Scott, C.
\newblock A rate of convergence for mixture proportion estimation, with
  application to learning from noisy labels.
\newblock In \emph{AISTATS}, 2015.

\bibitem[Scott et~al.(2013)Scott, Blanchard, and
  Handy]{scott2013classification}
Scott, C., Blanchard, G., and Handy, G.
\newblock Classification with asymmetric label noise: Consistency and maximal
  denoising.
\newblock In \emph{Conference On Learning Theory}, pp.\  489--511, 2013.

\bibitem[Shen \& Sanghavi(2019)Shen and Sanghavi]{shen2019learning}
Shen, Y. and Sanghavi, S.
\newblock Learning with bad training data via iterative trimmed loss
  minimization.
\newblock In \emph{ICML}, 2019.

\bibitem[Shu et~al.(2018)Shu, Xu, and Meng]{shu2018small}
Shu, J., Xu, Z., and Meng, D.
\newblock Small sample learning in big data era.
\newblock \emph{arXiv preprint arXiv:1808.04572}, 2018.

\bibitem[Shu et~al.(2019)Shu, Xie, Yi, Zhao, Zhou, Xu, and Meng]{shu2019meta}
Shu, J., Xie, Q., Yi, L., Zhao, Q., Zhou, S., Xu, Z., and Meng, D.
\newblock Meta-weight-net: Learning an explicit mapping for sample weighting.
\newblock In \emph{NeurIPS}, 2019.

\bibitem[Song et~al.(2019)Song, Kim, and Lee]{song2019selfie}
Song, H., Kim, M., and Lee, J.-G.
\newblock Selfie: Refurbishing unclean samples for robust deep learning.
\newblock In \emph{ICML}, 2019.

\bibitem[Sukhbaatar et~al.(2015)Sukhbaatar, Bruna, Paluri, Bourdev, and
  Fergus]{sukhbaatar2014training}
Sukhbaatar, S., Bruna, J., Paluri, M., Bourdev, L., and Fergus, R.
\newblock Training convolutional networks with noisy labels.
\newblock In \emph{ICLR workshop}, 2015.

\bibitem[Szegedy et~al.(2016)Szegedy, Vanhoucke, Ioffe, Shlens, and
  Wojna]{szegedy2016rethinking}
Szegedy, C., Vanhoucke, V., Ioffe, S., Shlens, J., and Wojna, Z.
\newblock Rethinking the inception architecture for computer vision.
\newblock In \emph{CVPR}, 2016.

\bibitem[Tanaka et~al.(2018)Tanaka, Ikami, Yamasaki, and
  Aizawa]{tanaka2018joint}
Tanaka, D., Ikami, D., Yamasaki, T., and Aizawa, K.
\newblock Joint optimization framework for learning with noisy labels.
\newblock In \emph{CVPR}, 2018.

\bibitem[Thrun \& Pratt(1998)Thrun and Pratt]{thrun2012learning}
Thrun, S. and Pratt, L.
\newblock \emph{Learning to learn}.
\newblock Springer, 1998.

\bibitem[Vahdat(2017)]{vahdat2017toward}
Vahdat, A.
\newblock Toward robustness against label noise in training deep discriminative
  neural networks.
\newblock In \emph{NeurIPS}, 2017.

\bibitem[Van~Rooyen \& Williamson(2017)Van~Rooyen and
  Williamson]{van2017theory}
Van~Rooyen, B. and Williamson, R.~C.
\newblock A theory of learning with corrupted labels.
\newblock \emph{Journal of Machine Learning Research}, 18:\penalty0 228--1,
  2017.

\bibitem[Wang et~al.(2017)Wang, Kucukelbir, and Blei]{wang2017robust}
Wang, Y., Kucukelbir, A., and Blei, D.~M.
\newblock Robust probabilistic modeling with bayesian data reweighting.
\newblock In \emph{ICML}, pp.\  3646--3655, 2017.

\bibitem[Xia et~al.(2019)Xia, Liu, Wang, Han, Gong, Niu, and
  Sugiyama]{xia2019anchor}
Xia, X., Liu, T., Wang, N., Han, B., Gong, C., Niu, G., and Sugiyama, M.
\newblock Are anchor points really indispensable in label-noise learning?
\newblock In \emph{NeurIPS}, 2019.

\bibitem[Xiao et~al.(2015)Xiao, Xia, Yang, Huang, and Wang]{xiao2015learning}
Xiao, T., Xia, T., Yang, Y., Huang, C., and Wang, X.
\newblock Learning from massive noisy labeled data for image classification.
\newblock In \emph{CVPR}, 2015.

\bibitem[Yao et~al.(2019)Yao, Wu, Zhang, Tsang, and Sun]{yao2019safeguarded}
Yao, J., Wu, H., Zhang, Y., Tsang, I.~W., and Sun, J.
\newblock Safeguarded dynamic label regression for noisy supervision.
\newblock In \emph{AAAI}, 2019.

\bibitem[Yin et~al.(2019)Yin, Kannan, and Bartlett]{yin2019rademacher}
Yin, D., Kannan, R., and Bartlett, P.
\newblock Rademacher complexity for adversarially robust generalization.
\newblock In \emph{ICML}, 2019.

\bibitem[Yu et~al.(2019)Yu, Han, Yao, Niu, Tsang, and Sugiyama]{yu2019does}
Yu, X., Han, B., Yao, J., Niu, G., Tsang, I., and Sugiyama, M.
\newblock How does disagreement help generalization against label corruption?
\newblock In \emph{ICML}, 2019.

\bibitem[Zhang et~al.(2017{\natexlab{a}})Zhang, Bengio, Hardt, Recht, and
  Vinyals]{zhang2016understanding}
Zhang, C., Bengio, S., Hardt, M., Recht, B., and Vinyals, O.
\newblock Understanding deep learning requires rethinking generalization.
\newblock In \emph{ICLR}, 2017{\natexlab{a}}.

\bibitem[Zhang et~al.(2017{\natexlab{b}})Zhang, Cisse, Dauphin, and
  Lopez-Paz]{zhang2017mixup}
Zhang, H., Cisse, M., Dauphin, Y.~N., and Lopez-Paz, D.
\newblock mixup: Beyond empirical risk minimization.
\newblock In \emph{ICLR}, 2017{\natexlab{b}}.

\bibitem[Zhang \& Sabuncu(2018)Zhang and Sabuncu]{zhang2018generalized}
Zhang, Z. and Sabuncu, M.
\newblock Generalized cross entropy loss for training deep neural networks with
  noisy labels.
\newblock In \emph{NeurIPS}, 2018.

\end{thebibliography}
\bibliographystyle{icml2020}

\newpage
\appendix

\section{Solution of Estimating Noise Transition }
In our paper, we jointly learn the noise transition matrix and classifier by miniming the following bi-level optimization problems \cite{franceschi2018bilevel,shu2019meta}
\begin{align}\label{eq1}
T^* = \mathop{\arg\min}_{T\in [0,1]^{c\times c}} \frac{1}{M} \sum_{i=1}^M \mathcal{L}_{M} (f^*_T(x_i^{(m)}),y_i^{(m)}),\\ \label{eq2}
f^*_T=
\mathop{\arg\min}_{f\in \mathcal{F}}\mathbb{E}_{(X,\widetilde{Y})\sim P_{X\widetilde{Y}}} \ell (T(\Theta)^Tf(X),\widetilde{Y}).
\end{align}
The empirical version of above can be written as follows used in our main paper:
\begin{align} \label{eq3}
T^* = \mathop{\arg\min}_{T\in [0,1]^{c\times c}} \frac{1}{M} \sum_{i=1}^M \mathcal{L}_{M} (f^*_T(x_i^{(m)}),y_i^{(m)}),\\ \label{eq4}
f^*_T=
\mathop{\arg\min}_{f\in \mathcal{F}}-\frac{1}{N}\sum_{i=1}^N \ell (T^Tf(x_i),\widetilde{y}_i),
\end{align}

We try to illustrate that the theoretical solution of above optimization problems recover the solution we require.

\begin{lemma}\label{lemma1}
	Suppose $\ell$ is the cross-entropy loss, and $f(X) \in \Delta^{c-1}$, i.e., $\sum_{i=1}^c f_i(X)=1$. Then by minimizing the expected risk $R(f)=\mathbb{E}_{(X,Y)\sim P_{XY}} \ell (f(X),Y)$, the optimal mapping $f^*$ satisfies $f^*_i(X)=p(Y=i|X), \forall i\in[c]$.
\end{lemma}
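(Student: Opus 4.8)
The plan is to reduce the minimization of the expected risk to a pointwise optimization over the probability simplex, and then solve the resulting finite-dimensional subproblem. First I would expand the cross-entropy loss on a one-hot label, $\ell(f(X),Y)=-\sum_{i=1}^c \mathbf{1}\{Y=i\}\log f_i(X)$, and condition on $X$ via the tower property, obtaining
\[
R(f)=\mathbb{E}_{X}\Big[-\sum_{i=1}^c p(Y=i\mid X)\log f_i(X)\Big].
\]
Because the outer expectation is with respect to a probability measure and the bracketed term is precisely the conditional risk at $X$, it suffices to find, for (almost) every fixed $x$, the vector $q=(q_1,\dots,q_c)\in\Delta^{c-1}$ minimizing $-\sum_{i=1}^c p(Y=i\mid x)\log q_i$; a map $f$ realizing this pointwise minimizer then minimizes $R$ over all measurable $\Delta^{c-1}$-valued functions, so the statement concerns the Bayes-optimal predictor rather than a restricted class $\mathcal{F}$.

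Next I would solve the simplex-constrained subproblem. Writing $p_i:=p(Y=i\mid x)$, I add and subtract $\sum_i p_i\log p_i$ to rewrite the objective as $H(p)+\mathrm{KL}(p\,\|\,q)$, where $H(p)=-\sum_i p_i\log p_i$ and $\mathrm{KL}(p\,\|\,q)=\sum_i p_i\log(p_i/q_i)$. Gibbs' inequality then gives $\mathrm{KL}(p\,\|\,q)\ge 0$ with equality if and only if $q=p$, so the unique minimizer is $q=p$; equivalently, a Lagrange-multiplier argument on $-\sum_i p_i\log q_i+\lambda(\sum_i q_i-1)$ yields the stationarity condition $q_i=p_i/\lambda$, and the normalization constraint forces $\lambda=1$. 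Substituting back, the minimizing map satisfies $f^*_i(x)=p(Y=i\mid x)$ for every $i\in[c]$ and every $x$, which is exactly the claim (this is the standard fact that cross-entropy is a strictly proper scoring rule).

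The only delicate points are bookkeeping rather than substance, and I do not expect a real obstacle. One must observe that placing mass $q_i=0$ on a class with $p_i>0$ sends the loss to $+\infty$, so the optimum is attained in the relative interior of $\Delta^{c-1}$ and the unconstrained calculus is justified; one should note that $x\mapsto p(\cdot\mid x)$ is measurable, so the pointwise minimizer defines an admissible function; and one must be explicit that the ``minimize under the integral'' step is legitimate precisely because the optimization ranges over all $\Delta^{c-1}$-valued functions. None of these require more than elementary measure theory and convexity, so the proof should be short.
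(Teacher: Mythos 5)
Your proof is correct, and its primary route differs from the paper's in a way worth noting. The paper conditions on $X$ and then runs a bare Lagrange-multiplier computation on $-\sum_i p_i\log f_i$ subject to $\sum_i f_i=1$, obtaining the stationary point $f_i^*=p_i$; it does not verify that this stationary point is a global minimum, nor address the boundary of the simplex, the legitimacy of minimizing pointwise under the expectation, or measurability. Your main argument instead rewrites the conditional risk as $H(p)+\mathrm{KL}(p\,\|\,q)$ and invokes Gibbs' inequality, which delivers global optimality \emph{and} uniqueness in one stroke, and you explicitly justify the pointwise reduction (the optimization is over all $\Delta^{c-1}$-valued maps), the exclusion of boundary optima when $p_i>0$, and measurability of the minimizer. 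You also include the paper's Lagrangian calculation as an equivalent aside. In short, the two arguments reach the same conclusion, but your KL/Gibbs route is the more complete one: the paper's proof identifies the candidate, while yours certifies it.
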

\begin{proof}
	Minimizing the expected risk $R(f)$ can be written as 
	\begin{align}
	\begin{split}
	\min_{f \in \mathcal{F}}\ \mathcal{\phi}(f) &= -\sum_{i=1}^c p(Y=i|X) \log f_i(X),\\
	&s.t. \sum_{i=1}^{c} f_i(X)=1.
	\end{split}
	\end{align}
	By using Lagrange Multiplier method, we have 
	\begin{align}
	f^* = \mathop{\arg\min}_{f \in \mathcal{F}}  \mathcal{L}(f):= \mathcal{\phi}(f) -\lambda\left(\sum_{i=1}^{c} f_i(X)-1\right).
	\end{align}
	Take the erivative of $\mathcal{L}(f)$ witth respect to $f$, we have $\frac{\partial \mathcal{L}(f)}{\partial f_i^*}=0$. Thus, we have 
	\begin{align}
	f_i^*(X)=-\lambda p(Y=i|X), \forall i \in [c], \forall X \in \mathcal{X}.
	\end{align}
	Since $\sum_{i=1}^{c} f_i(X)=1$ and $\sum_{i=1}^{c} p(Y=i|X)=1$, we can easily obtain $\lambda=-1$. Therefore, we have
	\begin{align}
	f^*_i(X)=p(Y=i|X), \forall i\in[c],\forall X \in \mathcal{X}.
	\end{align}
\end{proof}

\begin{theorem}
	Suppose both of the training and meta loss used in our paper are cross-entropy loss and the meta data are i.i.d. drawn from clean data.
	Then the solution $T^*$ by minimizing Eq.(\ref{eq1})(\ref{eq2}) can recover the ground-truth noise transition matrix in a certain probability .
\end{theorem}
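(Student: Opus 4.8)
The strategy is to first identify the exact minimizers of the population bi-level problem (\ref{eq1})--(\ref{eq2}) and show they coincide with the ground-truth transition matrix, which I denote $T_0$, and then to pass to the finite meta-sample regime to obtain the ``in a certain probability'' (i.e.\ consistency) conclusion. Throughout I would assume, as is standard in this line of work, that the hypothesis class $\mathcal{F}$ is rich enough to represent every measurable map $\mathcal{X}\to\Delta^{c-1}$, that $T_0$ is nonsingular, and that the family of clean posteriors $\{p(Y|X=x):x\in\mathcal{X}\}$ spans $\mathbb{R}^c$ --- a mild identifiability requirement met whenever there are $c$ feature values with linearly independent posterior vectors; I would also assume the noise is mild enough that the inner Bayes target described below is realizable.

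Step 1 (inner problem). I would fix $T$ and set $g(X)=T^Tf(X)$. Since $T$ is row-stochastic, $f(X)\in\Delta^{c-1}$ implies $g(X)\in\Delta^{c-1}$, so the inner objective of (\ref{eq2}), already written in population form, is a cross-entropy risk in $g$. Applying Lemma \ref{lemma1} to $g$ --- and using that $\mathcal{F}$ is expressive enough that the constrained minimum matches the unconstrained one whenever the target lies in $T^T(\Delta^{c-1})$ --- every inner optimizer obeys $T^Tf^*_T(X)=p(\widetilde{Y}|X)$ for $P_X$-a.e.\ $X$. Together with the transition identity (\ref{eqlabel}), i.e.\ $p(\widetilde{Y}|X)=T_0^Tp(Y|X)$, this becomes $T^Tf^*_T(X)=T_0^Tp(Y|X)$; in particular $f^*_{T_0}(X)=p(Y|X)$ by nonsingularity of $T_0$, and $f^*_T(X)=(T^T)^{-1}T_0^Tp(Y|X)$ for any nonsingular $T$.

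Step 2 (outer problem and identification). Because the meta data are i.i.d.\ from the clean distribution and $\mathcal{L}_M$ is the CE loss, as $M\to\infty$ the outer objective of (\ref{eq1}) converges to $\mathbb{E}_{(X,Y)\sim P_{XY}}\,\ell(f^*_T(X),Y)$, once more a cross-entropy risk. By Lemma \ref{lemma1} this is at least the conditional entropy of $Y$ given $X$, with equality exactly when $f^*_T(X)=p(Y|X)$ a.e. Since $T_0$ attains this lower bound, $T_0$ is a global minimizer of the population bi-level problem. Conversely, if $T^*$ is any population minimizer then $f^*_{T^*}(X)=p(Y|X)$ a.e., so Step 1 gives ${T^*}^Tp(Y|X)=T_0^Tp(Y|X)$ for a.e.\ $X$; the spanning assumption forces $({T^*}-T_0)^Tv=0$ on a spanning set of vectors $v$, hence $T^*=T_0$. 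Thus the ground-truth matrix is the unique population solution.

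Step 3 (finite meta-sample consistency). Since $T$ lives in the compact set $[0,1]^{c\times c}$ and, under the above assumptions, $T\mapsto\mathbb{E}\,\ell(f^*_T(X),Y)$ is continuous, a uniform law of large numbers shows the empirical outer objective of (\ref{eq1}) converges uniformly over $T$ to its population limit; standard extremum-estimator arguments then yield that for any $\varepsilon>0$ and $\delta\in(0,1)$ the empirical minimizer $\widehat{T}$ satisfies $\|\widehat{T}-T_0\|\le\varepsilon$ with probability at least $1-\delta$ once $M$ is sufficiently large --- the asserted recovery of $T_0$ in probability. The hard part is Step 1: making rigorous that for the relevant $T$ the Bayes target $p(\widetilde{Y}|X)$ actually lies in the polytope $T^T(\Delta^{c-1})$, since otherwise $f^*_T$ is pinned to the simplex boundary, the clean identity $T^Tf^*_T(X)=T_0^Tp(Y|X)$ weakens to a KL-projection statement, and identifiability can be recovered only after restricting $T$ to a neighborhood of the admissible matrices; controlling non-uniqueness of the inner minimizer is the secondary difficulty, while the nonsingularity of $T_0$ and the spanning condition are exactly what close the identification argument once Step 1 holds.
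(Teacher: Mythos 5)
Your proposal is correct and rests on the same central mechanism as the paper's own proof --- Lemma 1 (the CE minimizer recovers the class posterior) applied at both levels, so that the inner minimization forces $T^Tf^*_T=p(\widetilde{Y}|X)=T_0^Tp(Y|X)$ while the outer meta-minimization forces $f^*_{T^*}=p(Y|X)$ --- but you execute it along a genuinely cleaner route. The paper works directly with the finite meta sample: it applies Hoeffding's inequality to sandwich $R(f_{T^*})-R(f_{T^G})$ by its empirical counterpart up to $2\sqrt{\ln(2/\delta)/2M}$, and then argues by contradiction that if $T^*$ fails to recover the ground truth $T^G$ (your $T_0$) the meta risk cannot be minimal, so the minimization ``pushes'' $T^*$ toward $T^G$. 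You instead separate a population-level identification step from a finite-sample consistency step (uniform law of large numbers plus an extremum-estimator argument). What your version buys is precisely the two hypotheses the paper uses silently but never states: nonsingularity of $T_0$ and the requirement that the clean posteriors $\{p(Y\,|\,X=x)\}$ span $\mathbb{R}^c$. Without the spanning condition the paper's key contradiction step --- that $f_{T^*}=p(Y|X)$ and $T^*\neq T^G$ would force ${T^*}^Tf_{T^*}\neq p(\widetilde{Y}|X)$ --- is simply not valid, since $({T^*})^Tp(Y|X)$ can coincide with $(T^G)^Tp(Y|X)$ on the support of $X$ even when $T^*\neq T^G$; your spanning assumption is exactly what closes this. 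You also correctly flag the realizability issue (whether $p(\widetilde{Y}|X)$ lies in the polytope $T^T(\Delta^{c-1})$ for the candidate $T$, without which the inner characterization degrades to a KL projection), which neither write-up fully resolves, and your uniform convergence over the compact set $[0,1]^{c\times c}$ repairs the paper's use of a pointwise Hoeffding bound as though it held uniformly over $\mathcal{F}$. In short: same skeleton, but your version supplies the identifiability and uniformity conditions needed to make the argument rigorous, at the cost of extra assumptions the paper prefers to leave implicit.
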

\begin{proof}
	The expected risk on clean data is defined as \cite{bartlett2006convexity}:
	\begin{align}
	R(f) = \mathbb{E}_{(X,Y)\sim P_{XY}} \ell (f(X),Y),
	\end{align}
	and the empirical risk over meta dataset $D_{Meta}$ is defined as:
	\begin{align}
	R_M(f) = \frac{1}{M}\sum_{i=1}^M \ell (f(x_i^{(m)}),y_i^{(m)}).
	\end{align}
	Since meta dataset can be seen as i.i.d. sampling from clean data, we can deduce that by Hoeffding’s inequality,
	$\forall \ 0<\varepsilon<1, $,  the following holds for all $f\in \mathcal{F}$ with probability at least $1-\delta$
	\begin{align}
	R_M(f)-\sqrt{\frac{\ln(2/\delta)}{2M}}\leq R(f) \leq R_M(f)+\sqrt{\frac{\ln(2/\delta)}{2M}}.
	\end{align}
	We denote $T^*$ and $T^G$ as the learned transition matrix by minimizing Eq.(\ref{eq1})(\ref{eq2}) and the underlying transition matrix, respectively. We calculate $R(f_{T^*})-R(f_{T^G})$ to character the difference between $T^*$ and $T^G$, since $T^G$ is unavailable. Since
	\begin{align*}
	&R(f_{T^*})-R(f_{T^G}) = R(f_{T^*})-R_M(f_{T^*})\\
	&+R_M(f_{T^*})-R_M(f_{T^G})+R_M(f_{T^G})-R(f_{T^G}),
	\end{align*}
	we have the following holds for all $f\in \mathcal{F}$ with probability at least $1-\delta$
	\begin{align}\label{eqss}
	\begin{split}
	& R_M(f_{T^*})-R_M(f_{T^G}) -2\sqrt{\frac{\ln(2/\delta)}{2M}}\\
	&\leq R(f_{T^*})-R(f_{T^G}) \leq\\
	& R_M(f_{T^*})-R_M(f_{T^G}) +2\sqrt{\frac{\ln(2/\delta)}{2M}}
	\end{split}
	\end{align}
	thus $R_M(f_{T^*})-R_M(f_{T^G})$ can control the approximation degree of $R(f_{T^*})-R(f_{T^G})$. Subsequently, we will show that minimizing Eq.(\ref{eq1})(\ref{eq2}) can make $R_M(f_{T^*})-R_M(f_{T^G})$ as small as possible.

	We provide the proof by contradiction. Suppose that the optimal solution  $T^*$ of Eq.(\ref{eq1}) can not recover the ground-truth noise transition matrix,  we can show that $f_{T^*}$ obtained by optimizing Eq.(\ref{eq2}) still overfits to the label noise. Otherwise, when $f_{T^*}$ recovers the clean classifier, we have ${T^*}^T f_{T^*}\neq p(\widetilde{Y}|X)$. However, by Lemma \ref{lemma1}, the minimization of Eq.(\ref{eq1})(\ref{eq2}) pushes that ${T^*}^T f_{T^*}= p(\widetilde{Y}|X)$ holds.
	This means that $f_{T^*}$ can not recover the classifier on the clean data $p(Y|X)$. 
	Thus $R_M(f_{T^*})$ can not get the best performance. Then minimizing Eq.(\ref{eq1}) pushes $R_M(f_{T^*})$ as small as possible until $T^*$ approaches to $T^G$, i.e., pushes $R_M(f_{T^*})-R_M(f_{T^G})$ as small as possible.
	
	Based on Eq.(\ref{eqss}), $R(f_{T^*})-R(f_{T^G})$ can be bounded by minimizing Eq.(\ref{eq1})(\ref{eq2}). In other words, Eq.(\ref{eqss}) holds for all $f\in \mathcal{F}$ with probability at least $1-\delta$. Since $R_M(f_{T^*})-R_M(f_{T^G})$ can be very small, we can deduce that $T^*$ recover $T^G$ in a certain probability.
	The proof is completed.
\end{proof}

\begin{remark}
	Since Eq.({\ref{eq4}}) is the empirical version of Eq.(\ref{eq2}), when $N\rightarrow \infty$, the solution of Eq.({\ref{eq4}}) can approach to the solution of Eq.({\ref{eq2}}). In view of this, the analysis of minimizing Eq.(\ref{eq3})(\ref{eq4}) can be easily incorporated in Theorem 1.
\end{remark}

\section{Generalization Error}
The results in this paper focus on Rademacher complexity \cite{bartlett2002rademacher,mohri2018foundations,yin2019rademacher}, which is a standard tool to control the uniform convergence (and hence the sample complexity) of given classes of predictors. Here, we present its formal definition. For any function class $\mathcal{H}\subseteq \mathbb{R}^{\mathcal{Z}}$, given a sample $\mathcal{D}=\{z_1,z_2,\cdots,z_N\}$ of size $N$, the empirical Rademacher complexity is defined as
\begin{align}
\hat{\mathfrak{R}}_{\mathcal{D}}(\mathcal{H})=\frac{1}{N}\mathbb{E}_{\sigma}\left[\sup_{h\in \mathcal{H}}\sum_{i=1}^N \sigma_i h(z_i)\right],
\end{align}
wher $\sigma_1,\sigma_2,\cdots,\sigma_N$ are i.i.d. Rademacher random variables with $p\{\sigma_i=1\}=p\{\sigma_i=-1\}=\frac{1}{2}$. In our learning problem, denote the training sample by clean dataset $\mathcal{D}= \{(x_i,y_i)\}_{i=1}^N$ and noisy dataset $\widetilde{\mathcal{D}}\{(x_i,\widetilde{y}_i)\}_{i=1}^N$. The expected and empirical risks are $R(f)= \mathbb{E}_{(X,Y)\sim P_{XY}} \ell (f(X),Y)$ and $R_N(f) = \frac{1}{N}\sum_{i=1}^N \ell (f(x_i),y_i)$; We then have the following theorem which connects the expected and empirical risks via Rademacher complexity.
\begin{theorem}[Rademacher Complexity]
	Suppose that the range of the loss function $\ell$ is $[0,M]$. Then for any $\delta \in (0,1)$, with the probability at least $1-\delta$, the following holds for all $f\in\mathcal{F}$:
	\begin{align}
	R(f)\leq R_N(f)+ 2M \hat{\mathfrak{R}}_{\mathcal{D}}(\ell \circ \mathcal{F}) + 3M \sqrt{\frac{\log 2/\delta}{2N}},
	\end{align}
	where $\hat{\mathfrak{R}}_{\mathcal{D}}(\ell \circ \mathcal{F})=\mathbb{E}_{\sigma}[\sup_{f\in\mathcal{F}} \frac{1}{N}\sum_{i=1}^{N} \sigma_i \ell(f(x_i),y_i) ]$ is the Rademacher complexity; $\{\sigma_1,\sigma_2,\cdots,\sigma_N\}$ are Rademacher variables uniformly distributed from $\{-1,1\}$.
\end{theorem}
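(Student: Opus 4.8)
The plan is to treat this as the standard uniform-convergence bound via Rademacher complexity (as in \cite{bartlett2002rademacher,mohri2018foundations}), which here serves as the engine behind Theorem \ref{TH3}. Write $G=\ell\circ\mathcal{F}$ for the induced loss class, whose members take values in $[0,M]$, and introduce the single scalar random variable
\[
\Phi(\mathcal{D})=\sup_{f\in\mathcal{F}}\bigl(R(f)-R_N(f)\bigr).
\]
Because $R(f)-R_N(f)\le\Phi(\mathcal{D})$ for every $f\in\mathcal{F}$, it suffices to bound $\Phi(\mathcal{D})$ from above with high probability in the advertised form. The argument decomposes into three steps: (i) a concentration step pinning $\Phi(\mathcal{D})$ to its expectation; (ii) a symmetrization step replacing $\mathbb{E}[\Phi(\mathcal{D})]$ by twice the \emph{expected} Rademacher complexity of $G$; and (iii) a second concentration step passing from the expected to the \emph{empirical} Rademacher complexity $\hat{\mathfrak{R}}_{\mathcal{D}}(\ell\circ\mathcal{F})$.

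For step (i) I would verify the bounded-differences property: swapping one example $(x_i,y_i)$ for $(x_i',y_i')$ changes $\Phi(\mathcal{D})$ by at most $M/N$, since the two empirical averages differ only in one summand and $\ell\in[0,M]$; McDiarmid's inequality then gives $\Phi(\mathcal{D})\le\mathbb{E}_{\mathcal{D}}[\Phi(\mathcal{D})]+M\sqrt{\log(2/\delta)/(2N)}$ with probability at least $1-\delta/2$. For step (ii) I would run the usual ghost-sample computation — introduce an independent copy $\mathcal{D}'$, write $R(f)=\mathbb{E}_{\mathcal{D}'}R_N'(f)$, move the supremum inside with Jensen, and symmetrize by inserting i.i.d.\ Rademacher signs $\sigma_i$ — to obtain $\mathbb{E}_{\mathcal{D}}[\Phi(\mathcal{D})]\le 2\,\mathbb{E}_{\mathcal{D}}[\hat{\mathfrak{R}}_{\mathcal{D}}(\ell\circ\mathcal{F})]$. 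For step (iii) I would note that $\mathcal{D}\mapsto\hat{\mathfrak{R}}_{\mathcal{D}}(\ell\circ\mathcal{F})$ also has bounded differences with constant $M/N$ (changing one point perturbs $\tfrac1N\sum_i\sigma_i\ell(f(x_i),y_i)$ by at most $M/N$, uniformly in $f$ and $\sigma$), so a second McDiarmid gives $\mathbb{E}_{\mathcal{D}}[\hat{\mathfrak{R}}_{\mathcal{D}}(\ell\circ\mathcal{F})]\le\hat{\mathfrak{R}}_{\mathcal{D}}(\ell\circ\mathcal{F})+M\sqrt{\log(2/\delta)/(2N)}$ with probability at least $1-\delta/2$.

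I would finish by a union bound over the two failure events and then chain the three inequalities: on the resulting event of probability at least $1-\delta$,
\[
\Phi(\mathcal{D})\le 2\,\hat{\mathfrak{R}}_{\mathcal{D}}(\ell\circ\mathcal{F})+3M\sqrt{\frac{\log(2/\delta)}{2N}},
\]
and hence $R(f)\le R_N(f)+2M\hat{\mathfrak{R}}_{\mathcal{D}}(\ell\circ\mathcal{F})+3M\sqrt{\log(2/\delta)/(2N)}$ for all $f\in\mathcal{F}$ (the leading constant appearing as $2M$ rather than $2$ under the paper's normalization convention for the loss in the complexity term). None of these steps is deep; I expect the only real work to be bookkeeping — getting the bounded-differences constants right and keeping the two $\delta/2$-level applications of McDiarmid consistent so that the tail contributions add to exactly $3M\sqrt{\log(2/\delta)/(2N)}$. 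Note that the network-specific hypotheses of Theorem \ref{TH3} (1-Lipschitz positive-homogeneous $\sigma$, Frobenius-norm bounds $\|W^{(i)}\|_F\le M_i$, input bound $B$) play no role here; they enter only afterward, when $\hat{\mathfrak{R}}_{\mathcal{D}}(\ell\circ\mathcal{F})$ is itself bounded via Lipschitz contraction on the CE loss and a layer-by-layer peeling argument.
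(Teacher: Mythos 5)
Your proof is correct and is exactly the standard McDiarmid--symmetrization--McDiarmid argument (Mohri et al., Theorem 3.3) that the paper invokes by citation without reproducing; the bounded-difference constants \(M/N\) and the \(\delta/2\) bookkeeping all check out, yielding \(R(f)\le R_N(f)+2\hat{\mathfrak{R}}_{\mathcal{D}}(\ell\circ\mathcal{F})+3M\sqrt{\log(2/\delta)/(2N)}\). As you note, this gives the complexity term with coefficient \(2\) rather than the paper's \(2M\) under the stated (unnormalized) definition of \(\hat{\mathfrak{R}}_{\mathcal{D}}(\ell\circ\mathcal{F})\), so your derivation in fact establishes a tighter bound that implies the paper's whenever \(M\ge 1\); the extra factor of \(M\) in the paper appears to be a normalization slip rather than anything your argument is missing.
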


In our paper, our goal is to minimize the following expected risk and the empirical risk with respected to noisy data to recover the unbias classifier,
\begin{align}\label{eq33}
\overline{R}(f) &= \mathbb{E}_{(X,\widetilde{Y})\sim P_{X\widetilde{Y}}} \ell (T^Tf(X),\widetilde{Y}), \\
\overline{R}_N(f)&=-\frac{1}{N}\sum_{i=1}^N \ell (T^Tf(x_i),\widetilde{y}_i)
\end{align}
Therefore, the Rademacher complexity for our problems can be expressed as follows:
\begin{corollary}
	For any $\delta \in (0,1)$, with the probability at least $1-\delta$, the following holds for all $f\in\mathcal{F}$:
	\begin{align*}
	\overline{R}(f)\leq \overline{R}_N(f)+ 2M \hat{\mathfrak{R}}_{\widetilde{\mathcal{D}}}(\hat{\ell} \circ \mathcal{F}) + 3M \sqrt{\frac{\log 2/\delta}{2N}},
	\end{align*}
	where $\hat{\ell} \circ \mathcal{F} = \{\ell (T^Tf(X),\widetilde{Y}): f\in\mathcal{F}\}$.
\end{corollary}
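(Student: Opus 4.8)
The plan is to read the Corollary off directly from the Rademacher Complexity theorem above, and then to indicate how it feeds into Theorem~\ref{TH3}. The key observation is that $\overline{R}(f)=\mathbb{E}_{(X,\widetilde{Y})\sim P_{X\widetilde{Y}}}\,\ell(T^{T}f(X),\widetilde{Y})$ and its empirical version $\overline{R}_N(f)$ are precisely the expected and empirical risks, over the i.i.d.\ noisy sample $\widetilde{\mathcal{D}}=\{(x_i,\widetilde{y}_i)\}_{i=1}^N$, of the composed predictor $x\mapsto\ell(T^{T}f(x),\cdot)$ ranging over the class $\hat{\ell}\circ\mathcal{F}=\{\ell(T^{T}f(X),\widetilde{Y}):f\in\mathcal{F}\}$. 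Hence the Rademacher Complexity theorem applies verbatim with $\mathcal{D}$ replaced by $\widetilde{\mathcal{D}}$ and $\ell\circ\mathcal{F}$ replaced by $\hat{\ell}\circ\mathcal{F}$, provided its sole quantitative hypothesis still holds, namely that the composed loss takes values in $[0,M]$. This is where one uses that $T$ is row-stochastic and $f(X)\in\Delta^{c-1}$: then $T^{T}f(X)$ again lies in $\Delta^{c-1}$, since $(T^{T}f(X))_j=\sum_i T_{ij}f_i(X)\ge 0$ and $\sum_j(T^{T}f(X))_j=\sum_i f_i(X)\sum_j T_{ij}=\sum_i f_i(X)=1$. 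So $\ell(T^{T}f(X),\widetilde{Y})$ ranges over the same interval as an ordinary CE loss applied to a softmax output, the boundedness hypothesis transfers, and instantiating the theorem yields the Corollary.

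To obtain Theorem~\ref{TH3} from the Corollary one then bounds $\hat{\mathfrak{R}}_{\widetilde{\mathcal{D}}}(\hat{\ell}\circ\mathcal{F})$ by $\frac{cB(\sqrt{2\log(2)d}+1)\prod_{i=1}^{d}M_i}{\sqrt{N}}$ in two moves. First, strip the CE loss and the linear map $T^{T}$ off the network outputs by Talagrand's contraction lemma: on the region where $\ell$ is $M$-bounded the map $p\mapsto-\log p_{\widetilde{y}}$ is Lipschitz, and composing with the row-stochastic $T^{T}$ only inflates the constant in a controlled way, so that up to the coordinate-wise summation over the $c$ outputs one has $\hat{\mathfrak{R}}_{\widetilde{\mathcal{D}}}(\hat{\ell}\circ\mathcal{F})\le c\,\hat{\mathfrak{R}}_{\widetilde{\mathcal{D}}}(\mathcal{H})$. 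Second, bound $\hat{\mathfrak{R}}_{\widetilde{\mathcal{D}}}(\mathcal{H})$ by the standard complexity estimate for depth-$d$ networks with Frobenius-norm-bounded weights and $1$-Lipschitz, positive-homogeneous activations, $\hat{\mathfrak{R}}_{\widetilde{\mathcal{D}}}(\mathcal{H})\le\frac{B(\sqrt{2\log(2)d}+1)\prod_{i=1}^{d}M_i}{\sqrt{N}}$, and substitute, absorbing constants into the $2cMB$ prefactor.

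The Corollary itself has essentially no obstacle; the only point needing care is the range check, i.e.\ that mixing a softmax output by the row-stochastic noise transition matrix keeps it in the simplex, so the $[0,M]$ hypothesis is not violated. The delicate step is the passage to Theorem~\ref{TH3}: the CE loss is bounded and Lipschitz only when the softmax outputs are bounded away from $0$ (otherwise $-\log p_{\widetilde{y}}$ blows up), so the $M$-boundedness assumption must be used consistently throughout the contraction argument, and one must check that contracting a $c$-dimensional output through $T^{T}$ costs only the advertised factor of $c$ rather than something larger.
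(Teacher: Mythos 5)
Your argument is correct and is essentially the paper's own (implicit) proof: the Corollary is just the generic Rademacher-complexity generalization theorem instantiated on the noisy sample $\widetilde{\mathcal{D}}$ with the composed loss class $\hat{\ell}\circ\mathcal{F}$, and your verification that the row-stochastic $T^{T}$ maps the softmax output back into the simplex (so the $[0,M]$ range hypothesis transfers) is exactly the one point requiring a check. Your additional remarks on descending to the explicit network bound go beyond the stated Corollary; there the paper avoids the $-\log p$ blow-up you worry about by proving $1$-Lipschitzness of the composed CE loss directly with respect to the pre-softmax outputs $h_j$, rather than with respect to the probabilities.
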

Here, the argument $f\in \mathcal{F}$ in the Rademacher complexity $\hat{\mathfrak{R}}_{\widetilde{\mathcal{D}}}(\hat{\ell} \circ \mathcal{F})$ indicates that $f$ is chosen from the function
space $\mathcal{F}$, which is generally determined by the function space of $h$ due to the fact that $f_i(x)=\frac{\exp(\bm{h}_i(x))}{\sum_{k=1}^c\exp(\bm{h}_k(x))},i=1,2,\cdots,c$. Thus, we have the following conclusion.
\begin{proposition}
	$\hat{\mathfrak{R}}_{\widetilde{\mathcal{D}}}(\hat{\ell} \circ \mathcal{F})	\leq c\ \hat{\mathfrak{R}}_{\widetilde{\mathcal{D}}}( \mathcal{H})$, where $\mathcal{H}$ denotes the hypothesis complexity of the classifier.
\end{proposition}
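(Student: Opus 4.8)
The plan is to peel the loss, the softmax and the (fixed) matrix $T^T$ off in one contraction step, which accounts exactly for the factor $c$. First I would rewrite a generic member of $\hat{\ell}\circ\mathcal{F}$ explicitly. With the cross-entropy loss and $f_i(x)=\exp(\bm{h}_i(x))/\sum_k\exp(\bm{h}_k(x))$, any element of $\hat{\ell}\circ\mathcal{F}$ has the form $x\mapsto \ell(T^Tf(x),\widetilde{y})=-\log\!\big(\sum_{i=1}^c T_{i\widetilde{y}}f_i(x)\big)=:\psi_{\widetilde{y}}(\bm{h}(x))$, a \emph{fixed} scalar map $\psi_{\widetilde{y}}:\mathbb{R}^c\to\mathbb{R}$ of the vector $\bm{h}(x)=(\bm{h}_1(x),\dots,\bm{h}_c(x))$ of pre-softmax outputs (note $T$ is held fixed; only $f$, i.e.\ only $\bm{h}$, is ranged over inside the Rademacher complexity). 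So it suffices to bound $\hat{\mathfrak{R}}_{\widetilde{\mathcal{D}}}\big(\{\psi_{\widetilde{y}}\circ\bm{h}: \bm{h}\in\mathcal{H}^c\}\big)$ by $c$ times the complexity of a single output-coordinate class, the latter coinciding with $\hat{\mathfrak{R}}_{\widetilde{\mathcal{D}}}(\mathcal{H})$ by the symmetry of the output layer.

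Second, I would establish the key Lipschitz bound on $\psi_{\widetilde{y}}$. A direct computation gives $\partial \psi_{\widetilde{y}}/\partial \bm{h}_k = -\,(f_k(x)/q)\,(T_{k\widetilde{y}}-q)$, where $q:=\sum_{i}T_{i\widetilde{y}}f_i(x)$. Since $q$ is a nonnegative sum containing the term $T_{k\widetilde{y}}f_k(x)$, one has $q\ge T_{k\widetilde{y}}f_k(x)$, and also $0\le q\le\max_i T_{i\widetilde{y}}\le 1$; these two facts together yield $|\partial \psi_{\widetilde{y}}/\partial \bm{h}_k|\le 1$ for every $k$ and every $x$. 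Thus $\psi_{\widetilde{y}}$ is $1$-Lipschitz in each coordinate separately — the convex-combination structure of $q$ exactly cancels the would-be singularity of $-\log$ at $0$, so no artificial lower bound on $q$ is needed (and in the no-noise case $T=I$ this reduces to the standard $\sqrt{2}$-Lipschitz bound for the log-softmax).

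Finally, I would apply a contraction principle to strip $\psi_{\widetilde{y}}$ off, obtaining $\hat{\mathfrak{R}}_{\widetilde{\mathcal{D}}}(\hat{\ell}\circ\mathcal{F})\le \sum_{k=1}^c \hat{\mathfrak{R}}_{\widetilde{\mathcal{D}}}(\mathcal{H}_k)$, where $\mathcal{H}_k$ is the class of the $k$-th output function; bounding $\sup$ of a sum by the sum of $\sup$'s and noting each coordinate map lies in $\mathcal{H}$ gives $\hat{\mathfrak{R}}_{\widetilde{\mathcal{D}}}(\mathcal{H}_k)\le \hat{\mathfrak{R}}_{\widetilde{\mathcal{D}}}(\mathcal{H})$, hence the claimed $c\,\hat{\mathfrak{R}}_{\widetilde{\mathcal{D}}}(\mathcal{H})$. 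The main obstacle is precisely this peeling step: the scalar Ledoux--Talagrand contraction cannot be invoked coordinate-by-coordinate in a naive way, because the coordinates $\bm{h}_1,\dots,\bm{h}_c$ are not independent — they are outputs of one network with shared parameters — so I would route the argument through the vector-valued contraction inequality (using the coordinatewise/$\ell_2$ Lipschitz bound from the second step) and then pass to the sum over coordinates, verifying that the constants collapse to the stated factor $c$; an alternative is a telescoping contraction that carefully tracks this coupling. The remaining ingredients — the gradient identity, the inequality $\hat{\mathfrak{R}}_{\widetilde{\mathcal{D}}}(\mathcal{H}_k)\le\hat{\mathfrak{R}}_{\widetilde{\mathcal{D}}}(\mathcal{H})$, and the substitution into the Corollary that recovers Theorem~\ref{TH3} after plugging in the depth-$d$ network Rademacher bound — are routine.
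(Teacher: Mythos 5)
Your proposal follows the same skeleton as the paper's proof: the same gradient computation showing $\partial\ell/\partial \bm{h}_k\in[-1,1]$ (your identity $-\,(f_k/q)(T_{k\widetilde{y}}-q)$ is exactly the paper's expression $-T_{k\widetilde{y}}f_k/q+f_k$), followed by a contraction step and a bound of the form $\sum_{k=1}^c\hat{\mathfrak{R}}_{\widetilde{\mathcal{D}}}(\mathcal{H}_k)\le c\,\hat{\mathfrak{R}}_{\widetilde{\mathcal{D}}}(\mathcal{H})$. Where you genuinely diverge is in how the contraction is executed. The paper first rewrites $\ell(T^Tf(X_i),\widetilde{Y}_i)$ as a sum over label values $j$, exchanges $\sup$ with that sum, and then applies the scalar Talagrand contraction to each summand as though $\ell(\cdot,\widetilde{Y}=j)$ were a function of the single coordinate $h_j$ alone. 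You correctly observe that this is the delicate point: the loss is a function of the entire vector $\bm{h}(x)=(\bm{h}_1(x),\dots,\bm{h}_c(x))$, whose coordinates share parameters, so the scalar Ledoux--Talagrand lemma does not apply coordinatewise in this naive way. Your fix --- routing through the vector-valued contraction inequality of Maurer --- is the standard rigorous repair, and in that sense your writeup is more careful than the paper's own argument. The one caveat is your claim that ``the constants collapse to the stated factor $c$'': Maurer's inequality is stated for Lipschitz continuity in the Euclidean norm and carries a $\sqrt{2}$ prefactor, so with the gradient being a difference of two probability vectors (Euclidean norm at most $\sqrt{2}$) you land at $2c\,\hat{\mathfrak{R}}_{\widetilde{\mathcal{D}}}(\mathcal{H})$ rather than $c\,\hat{\mathfrak{R}}_{\widetilde{\mathcal{D}}}(\mathcal{H})$; recovering the bare factor $c$ requires either accepting the paper's per-coordinate contraction as stated or absorbing the universal constant into the bound. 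This is a constant-factor discrepancy only, and it does not affect the $O(c/\sqrt{N})$ rate used downstream in Theorem~1, but it is worth stating explicitly rather than asserting that the constants ``collapse.''
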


\begin{proof}
	Firstly, we provide the following two lemmas related to our proof.
	\begin{lemma}
		The loss function $\ell(T^Tf(X),\widetilde{Y}=i)$ is 1-Lipschitz with respect to $h_j(X), \forall j\in [c]$, where $\ell$ is cross-entropy loss. 
	\end{lemma}
	\begin{proof}
		Since $f_j(x)=\frac{\exp(\bm{h}_j(x))}{\sum_{l=1}^c\exp(\bm{h}_l(x))},j=1,2,\cdots,c$, we have
		\begin{align*}
		\ell(T^Tf(X),\widetilde{Y}=i) = -\log \left(\sum_{k=1}^c T_{kj}\frac{\exp(h_j)}{\sum_{l=1}^c\exp{h_l}}\right). 
		\end{align*}
		Take the derivative of $\ell(T^Tf(X),\widetilde{Y}=i)$ with respect to $h_j(X)$, we have
		\begin{align*}
		&\frac{\partial \ell(T^Tf(X),\widetilde{Y}=i)}{\partial h_j(X)} \\
		&=-\frac{T_{ji}\exp(h_j(X))}{\sum\limits_{k=1}^c T_{ki} \exp(h_k(X))}+\frac{\exp(h_j(X))}{\sum\limits_{l=1}^c \exp(h_l(X))}
		\end{align*}
		Thus, we have
		\begin{align*}
		&\frac{\partial \ell(T^Tf(X),\widetilde{Y}=i)}{\partial h_j(X)} \leq  \frac{(1-T_{ji})\exp(h_j(X))}{\sum\limits_{l=1}^c  \exp(h_k(X))} \\
		&\leq   \frac{\exp(h_j(X))}{\sum\limits_{l=1}^c  \exp(h_k(X))}\leq 1,
		\end{align*}
		and 
		\begin{align*}
		\frac{\partial \ell(T^Tf(X),\widetilde{Y}=i)}{\partial h_j(X)} \geq -\frac{T_{ji}\exp(h_j(X))}{\sum\limits_{k=1}^c T_{ki} \exp(h_k(X))} \geq -1
		\end{align*}
		Therefore, we can demonstrate that the loss function is 1-Lipschitz with respect to $h_j(X), \forall j\in [c]$.
	\end{proof}
	Since $\hat{\mathfrak{R}}_{\widetilde{\mathcal{D}}}(\hat{\ell} \circ \mathcal{F})$ depends on the loss function, while not the hypothesis space. Talagrand's Contraction Lemma  \cite{ledoux1991probability,mohri2018foundations} tries connect both of them.
	
	\begin{lemma}[Talagrand's Contraction Lemma ] 
		Let $\Phi:\mathbb{R}\rightarrow \mathbb{R}$ be an $L$-Lipschitz function. Then, for any hypothesis set $\mathcal{H}$ of real-valued functions, we have
		\begin{align}
		\hat{\mathfrak{R}}_{S}(\Phi \circ \mathcal{H})\leq L  \hat{\mathfrak{R}}_{S}(\mathcal{H}).
		\end{align}
	\end{lemma}

	Now, we can proof the conclusion.
	\begin{small}
		\begin{align*}
		&\hat{\mathfrak{R}}_{\widetilde{\mathcal{D}}}(\hat{\ell} \circ \mathcal{F}) = \frac{1}{N}\mathbb{E}_{\sigma}\left[\sup_{f\in \mathcal{F}}\sum_{i=1}^N \sigma_i \ell (T^Tf(X_i),\widetilde{Y}_i)\right]\\
		&= \frac{1}{N}\mathbb{E}_{\sigma}\left[\sup_{h_j\in \mathcal{H}}\sum_{i=1}^N \sigma_i \sum_{j=1}^c \ell (\sum_{k=1}^cT_{kj} \frac{\exp(h_k(X_i))}{\sum\limits_{l=1}^c \exp(h_l(X_i))},\widetilde{Y}_i=j)\right]\\
		&= \frac{1}{N}\sum_{j=1}^c\mathbb{E}_{\sigma}\left[\sup_{h_j\in \mathcal{H}}\sum_{i=1}^N \sigma_i  \ell (\sum_{k=1}^cT_{kj} \frac{\exp(h_k(X_i))}{\sum\limits_{l=1}^c \exp(h_l(X_i))},\widetilde{Y}_i=j)\right]\\
		& \leq  \sum_{j=1}^c \frac{1}{N}\mathbb{E}_{\sigma}\left[\sup_{h_j\in \mathcal{H}} \sum_{i=1}^N \sigma_i h_j(X_i) \right]\\
		& = c\ \hat{\mathfrak{R}}_{\widetilde{\mathcal{D}}}(\mathcal{H}),
		\end{align*}
	\end{small}
	where the inequality holds since the Talagrand’s Contraction Lemma.
\end{proof}

Notice that $\hat{\mathfrak{R}}_{\widetilde{\mathcal{D}}}(\mathcal{H})$ measures the hypothesis complexity of deep neural networks, which has been widely studied recently \cite{bartlett2017spectrally,neyshabur2017pac,arora2018stronger,golowich2018size}. Here, we directly use the following Theorem in \cite{golowich2018size} to measure $\hat{\mathfrak{R}}_{\widetilde{\mathcal{D}}}(\mathcal{H})$.
\begin{theorem}\label{TH2}
	Let $\mathcal{H}$ be the class of real-valued networks of depth $d$ over the domain $\mathcal{X}=\{x: \|x\|\leq B\}$, where each parameter matrix $W^{(i)}$ has Frobenius norm at most $M_i$, and the activation function $\sigma$ is 1-Lipschitz, positive-homogeneous and applied element-wise (such as the ReLU). Then 
	\begin{align}
	\hat{\mathfrak{R}}_{\widetilde{\mathcal{D}}}(\mathcal{H}) \leq \frac{B\left(\sqrt{2\log(2)d}+1\right)\prod\limits_{i=1}^d M_i}{\sqrt{N}}.
	\end{align}
\end{theorem}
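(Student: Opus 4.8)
The plan is to establish Theorem~\ref{TH2} by the size-independent layer-peeling argument of \cite{golowich2018size}, which bounds the empirical Rademacher complexity of the depth-$d$ class directly and keeps the depth dependence at only $\sqrt{d}$ (with no dependence on the layer widths), rather than through a naive layer-by-layer contraction that would accumulate a $2^{d}$-type factor outside the $\sqrt{\cdot}$. First I would write $\hat{\mathfrak{R}}_{\widetilde{\mathcal{D}}}(\mathcal{H})=\frac{1}{N}\mathbb{E}_{\sigma}\sup_{h\in\mathcal{H}}\sum_{i=1}^N\sigma_i h(x_i)$ and, for a free parameter $\lambda>0$, pass to the exponential moment by Jensen:
\[
\mathbb{E}_{\sigma}\sup_{h}\sum_i\sigma_i h(x_i)\ \le\ \frac{1}{\lambda}\log\mathbb{E}_{\sigma}\sup_{h}\exp\Bigl(\lambda\sum_i\sigma_i h(x_i)\Bigr).
\]
Because each $h$ is scalar-valued, its output layer $W^{(d)}$ is a vector $w$ with $\|w\|\le M_d$, so $\sum_i\sigma_i h(x_i)=\bigl\langle w,\sum_i\sigma_i g(x_i)\bigr\rangle$ with $g$ the depth-$(d{-}1)$ sub-network output, and the supremum over $w$ turns the right-hand side into $\exp\bigl(\lambda M_d\|\sum_i\sigma_i g(x_i)\|\bigr)$.

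The heart of the argument is an inductive peeling step: for a $1$-Lipschitz, positive-homogeneous, element-wise activation $\sigma$ and a weight matrix $W$,
\[
\mathbb{E}_{\sigma}\sup_{g}\exp\Bigl(\lambda\bigl\|\sum_i\sigma_i\,\sigma(Wg(x_i))\bigr\|\Bigr)\ \le\ 2\,\mathbb{E}_{\sigma}\sup_{g}\exp\Bigl(\lambda\|W\|_F\,\bigl\|\sum_i\sigma_i g(x_i)\bigr\|\Bigr).
\]
Positive-homogeneity is what lets the row norms of $W$ (whose squared sum is $\|W\|_F^2$) be absorbed into the scalar $\|W\|_F$; a coordinate-wise application of Talagrand's Contraction Lemma strips the activation; and the factor $2$ is the price of the symmetrization identity $\mathbb{E}\exp(\lambda|Z|)\le 2\,\mathbb{E}\exp(\lambda Z)$ for symmetric $Z$, which is needed because the peeled quantity is a norm. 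Iterating this through the $d$ layers, with $\|W^{(i)}\|_F\le M_i$, collapses the network to $2^{d}\,\mathbb{E}_{\sigma}\exp\bigl(\lambda\prod_{i=1}^d M_i\,\|\sum_i\sigma_i x_i\|\bigr)$. The residual Rademacher sum is then controlled by $\mathbb{E}\|\sum_i\sigma_i x_i\|\le\sqrt{\sum_i\|x_i\|^2}\le B\sqrt{N}$ together with a sub-Gaussian (bounded-differences) bound on its moment generating function; substituting back, taking the logarithm — so that $2^{d}$ contributes $d\log 2$ — and minimizing the resulting expression $\tfrac{d\log 2}{\lambda}+B\sqrt N\prod_i M_i+\tfrac{\lambda}{2}B^2N\prod_i M_i^2$ over $\lambda$ produces the factor $\sqrt{2\log(2)d}+1$ and, after dividing by $N$, exactly the claimed bound $\hat{\mathfrak{R}}_{\widetilde{\mathcal{D}}}(\mathcal{H})\le \frac{B(\sqrt{2\log(2)d}+1)\prod_{i=1}^d M_i}{\sqrt N}$.

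The step I expect to be the main obstacle is the peeling lemma: arranging that each layer contributes only the constant factor $2$ (rather than a width- or depth-dependent factor) hinges on a delicate combination of positive-homogeneity with a vector-valued contraction estimate carried out inside the exponential moment, and the base case needs the right sub-Gaussian concentration for $\|\sum_i\sigma_i x_i\|$ so that the final optimization over $\lambda$ yields the clean $\sqrt{\log d}$ growth instead of a $\log d$ or linear-in-$d$ term. Once Theorem~\ref{TH2} is available, the generalization bound of Theorem~\ref{TH3} follows by chaining it with the preceding Proposition, $\hat{\mathfrak{R}}_{\widetilde{\mathcal{D}}}(\hat{\ell}\circ\mathcal{F})\le c\,\hat{\mathfrak{R}}_{\widetilde{\mathcal{D}}}(\mathcal{H})$, and the Rademacher-complexity generalization inequality, with $M$ (as in the statement of Theorem~\ref{TH3}) the range of the cross-entropy loss on the simplex-valued softmax output.
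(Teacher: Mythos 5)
Your proposal is correct, but it is worth noting that the paper does not actually prove this statement: it imports it verbatim as Theorem 1 of \cite{golowich2018size}, writing ``we directly use the following Theorem in \cite{golowich2018size}.'' What you have written is a faithful reconstruction of the proof in that reference --- the Jensen/exponential-moment reduction, the layer-peeling lemma in which positive-homogeneity concentrates the Frobenius mass on a single row and Talagrand's contraction (applied inside the exponential, at the cost of a factor $2$ per layer from $\mathbb{E}\exp(\lambda|Z|)\le 2\,\mathbb{E}\exp(\lambda Z)$) strips the activation, the base-case bound $\mathbb{E}\bigl\|\sum_i\sigma_i x_i\bigr\|\le B\sqrt{N}$ with a bounded-differences control of the moment generating function, and the optimization over $\lambda$ that converts the accumulated $2^{d}$ into the additive $\sqrt{2\log(2)d}$ term --- and your arithmetic for the final minimization is exactly right. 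So you have done strictly more than the paper: the paper's contribution at this point is only the chaining you describe at the end (the Lipschitz proposition giving the factor $c$, and the standard Rademacher generalization inequality), while the depth-$\sqrt{d}$, width-independent complexity bound itself is taken as a black box from the literature.
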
	

Combined with the above results, we have the following Theorem to prove tight bounds for the Rademacher complexity of our problems.
\begin{theorem}\label{TH3}
	Let $\mathcal{H}$ be the class of real-valued networks of depth $d$ over the domain $\mathcal{X}=\{x: \|x\|\leq B\}$, where each parameter matrix $W^{(i)}$ has Frobenius norm at most $M_i$, and the activation function $\sigma$ is 1-Lipschitz, positive-homogeneous and applied element-wise (such as the ReLU).  Let the loss function be the cross-entropy loss and $T,f$ be the learned noise transition matrix and classifier according to Eq.(\ref{eq3})(\ref{eq4}). Then for any $\delta \in (0,1)$, with the probability at least $1-\delta$, the following holds:
	\begin{align*}
	\overline{R}(f)\leq& \overline{R}_N(f)+  \frac{2cMB\left(\sqrt{2\log(2)d}+1\right)\prod\limits_{i=1}^d M_i}{\sqrt{N}}\\
	&  + 3M \sqrt{\frac{\log 2/\delta}{2N}}.
	\end{align*}
\end{theorem}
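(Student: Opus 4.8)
The plan is to assemble Theorem~\ref{TH3} by chaining together the three auxiliary results stated above, so the proof is essentially a bookkeeping exercise with one analytic input that must be verified. First I would invoke the Rademacher complexity bound (the first ``Theorem [Rademacher Complexity]'') applied to the noisy data: taking the hypothesis class to be $\hat\ell \circ \mathcal{F} = \{\ell(T^Tf(X),\widetilde{Y}): f\in\mathcal{F}\}$ and the sample to be $\widetilde{\mathcal{D}}$, this yields exactly the Corollary, namely $\overline{R}(f)\leq \overline{R}_N(f) + 2M\,\hat{\mathfrak{R}}_{\widetilde{\mathcal{D}}}(\hat\ell\circ\mathcal{F}) + 3M\sqrt{\log(2/\delta)/(2N)}$, valid with probability at least $1-\delta$ and uniformly over $f\in\mathcal{F}$. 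This takes care of the last term of the claimed bound verbatim.

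Next I would bound the middle term. By the Proposition, $\hat{\mathfrak{R}}_{\widetilde{\mathcal{D}}}(\hat\ell\circ\mathcal{F}) \leq c\,\hat{\mathfrak{R}}_{\widetilde{\mathcal{D}}}(\mathcal{H})$, and then by Theorem~\ref{TH2} (the size-independent bound of Golowich et al.), $\hat{\mathfrak{R}}_{\widetilde{\mathcal{D}}}(\mathcal{H}) \leq B(\sqrt{2\log(2)d}+1)\prod_{i=1}^d M_i/\sqrt{N}$. Substituting the second into the first and then into the Corollary gives $2M \cdot c \cdot B(\sqrt{2\log(2)d}+1)\prod_{i=1}^d M_i/\sqrt{N}$, which is precisely the second term in the statement of Theorem~\ref{TH3}. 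Combining the two displays completes the derivation, so the proof is just: apply Corollary, then Proposition, then Theorem~\ref{TH2}, then collect terms.

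The one genuine piece of content — already carried out in the Proposition's proof above — is the step where the Lipschitz lemma and Talagrand's contraction lemma are used to peel the cross-entropy loss off the class $\mathcal{F}$ and reduce to the raw network class $\mathcal{H}$, at the cost of a factor $c$ coming from summing the per-coordinate contributions $\hat{\mathfrak{R}}_{\widetilde{\mathcal{D}}}(h_j)$ over $j\in[c]$. I expect the main obstacle (or at least the only place where care is needed) to be ensuring the hypotheses of Theorem~\ref{TH2} and of the Lipschitz lemma are both met simultaneously: the activation must be $1$-Lipschitz, positive-homogeneous, and applied element-wise (ReLU), and the loss $\ell(T^Tf(X),\widetilde{Y})$ must be verified $1$-Lipschitz in each $h_j$ — which is exactly what the Lipschitz lemma establishes using the fact that $T$ is row-stochastic so the partial derivatives lie in $[-1,1]$. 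Given these, the rest is a routine chain of inequalities and no further estimation is required; I would simply state ``Combining the Corollary, Proposition, and Theorem~\ref{TH2} yields the claim'' and end the proof.

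\begin{proof}
Applying the Rademacher complexity theorem to the class $\hat\ell\circ\mathcal{F}$ over the noisy sample $\widetilde{\mathcal{D}}$, we obtain that for any $\delta\in(0,1)$, with probability at least $1-\delta$, for all $f\in\mathcal{F}$,
\begin{align*}
\overline{R}(f)\leq \overline{R}_N(f) + 2M\,\hat{\mathfrak{R}}_{\widetilde{\mathcal{D}}}(\hat\ell\circ\mathcal{F}) + 3M\sqrt{\frac{\log 2/\delta}{2N}},
\end{align*}
which is the Corollary. By the Proposition, $\hat{\mathfrak{R}}_{\widetilde{\mathcal{D}}}(\hat\ell\circ\mathcal{F})\leq c\,\hat{\mathfrak{R}}_{\widetilde{\mathcal{D}}}(\mathcal{H})$, and by Theorem~\ref{TH2},
\begin{align*}
\hat{\mathfrak{R}}_{\widetilde{\mathcal{D}}}(\mathcal{H}) \leq \frac{B\left(\sqrt{2\log(2)d}+1\right)\prod_{i=1}^d M_i}{\sqrt{N}}.
\end{align*}
Chaining these inequalities gives
\begin{align*}
\overline{R}(f)\leq \overline{R}_N(f) + \frac{2cMB\left(\sqrt{2\log(2)d}+1\right)\prod_{i=1}^d M_i}{\sqrt{N}} + 3M\sqrt{\frac{\log 2/\delta}{2N}},
\end{align*}
which is the desired bound.
\end{proof}
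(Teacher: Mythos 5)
Your proposal is correct and is essentially identical to the paper's own derivation: the paper likewise obtains Theorem~\ref{TH3} by combining the uniform Rademacher generalization bound (its Corollary for the noisy risk), the Proposition $\hat{\mathfrak{R}}_{\widetilde{\mathcal{D}}}(\hat\ell\circ\mathcal{F})\leq c\,\hat{\mathfrak{R}}_{\widetilde{\mathcal{D}}}(\mathcal{H})$ via the Lipschitz lemma and Talagrand's contraction, and the size-independent network complexity bound of Theorem~\ref{TH2}. Your identification of the contraction step (and the row-stochasticity of $T$ making the loss $1$-Lipschitz in each $h_j$) as the only substantive ingredient matches the paper exactly.
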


Furthermore, we can estimate the error between $f$ and the optimal classifier $f^*$ of Eq.(\ref{eq33}), which is also the optimal classifier on the clean data.
The error is estimated via upper bounding $\overline{R}(f) -\overline{R}(f^*)$.

\begin{corollary}
	Under the same conditions as Theorem \ref{TH3}, let $f^*$ be the optimal classifier learned on the clean data. Then for any $\delta \in (0,1)$, with the probability at least $1-\delta$, the following holds:
	\begin{align*}
	\begin{split}
	&\overline{R}(f) -\overline{R}(\overline{f}^*)\leq \\
	&\frac{4cMB\left(\sqrt{2\log(2)d}+1\right)\prod\limits_{i=1}^d M_i}{\sqrt{N}}  + 6M \sqrt{\frac{\log 2/\delta}{2N}},
	\end{split}
	\end{align*}
\end{corollary}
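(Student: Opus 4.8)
The plan is to reduce the corollary to the one-sided uniform-convergence statement already established in Theorem \ref{TH3}, applied on both sides through the standard excess-risk decomposition for empirical risk minimizers. Recall that $f$ is, by Eq.(\ref{eq4}), the minimizer of the empirical noisy risk $\overline{R}_N$ over $\mathcal{F}$, and I read $\overline{f}^*$ as $\arg\min_{g\in\mathcal{F}}\overline{R}(g)$; via the classifier-consistency relation $p(\widetilde{Y}|X)=T^T p(Y|X)$ this population minimizer coincides with the clean-optimal classifier $f^*$ once $T$ is the true transition matrix, so the two descriptions appearing in the statement agree.

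First I would write the textbook decomposition
\begin{align*}
\overline{R}(f)-\overline{R}(\overline{f}^*)
&=\big[\overline{R}(f)-\overline{R}_N(f)\big]
+\big[\overline{R}_N(f)-\overline{R}_N(\overline{f}^*)\big]\\
&\quad+\big[\overline{R}_N(\overline{f}^*)-\overline{R}(\overline{f}^*)\big].
\end{align*}
The middle bracket is at most $0$: since $f$ minimizes $\overline{R}_N$ over $\mathcal{F}$ and $\overline{f}^*\in\mathcal{F}$, we have $\overline{R}_N(f)\le\overline{R}_N(\overline{f}^*)$. Only the first and third brackets then survive, and each is dominated by the two-sided supremum $\sup_{g\in\mathcal{F}}\big|\overline{R}(g)-\overline{R}_N(g)\big|$.

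The crucial step is to control this two-sided supremum on a single high-probability event. The McDiarmid-plus-symmetrization argument underlying the Rademacher complexity theorem is symmetric in the sign of $\overline{R}(g)-\overline{R}_N(g)$, so with probability at least $1-\delta$ the same right-hand side bounds the absolute deviation uniformly:
\begin{align*}
\sup_{g\in\mathcal{F}}\big|\overline{R}(g)-\overline{R}_N(g)\big|
\le 2M\,\hat{\mathfrak{R}}_{\widetilde{\mathcal{D}}}(\hat{\ell}\circ\mathcal{F})
+3M\sqrt{\frac{\log 2/\delta}{2N}}.
\end{align*}
On this event both surviving brackets are bounded by the right-hand side, so their sum is at most twice it. Substituting the Proposition bound $\hat{\mathfrak{R}}_{\widetilde{\mathcal{D}}}(\hat{\ell}\circ\mathcal{F})\le c\,\hat{\mathfrak{R}}_{\widetilde{\mathcal{D}}}(\mathcal{H})$ and then Theorem \ref{TH2} converts $\hat{\mathfrak{R}}_{\widetilde{\mathcal{D}}}(\mathcal{H})$ into $B(\sqrt{2\log(2)d}+1)\prod_{i=1}^d M_i/\sqrt{N}$, and the overall factor of two promotes the $2cM$ and $3M$ of Theorem \ref{TH3} to the $4cM$ and $6M$ asserted in the corollary.

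The main obstacle is securing exactly these constants rather than a looser bound. A naive route — applying the one-sided Theorem \ref{TH3} to $f$ and Hoeffding to the fixed $\overline{f}^*$, or union-bounding two separate one-sided events — would either mismatch the second term or inflate $\log 2/\delta$ to $\log 4/\delta$. The clean way is to place the two-sided deviation on one event, observing that the bounded-differences constant $M/N$ and the symmetrization step are identical for $\overline{R}(g)-\overline{R}_N(g)$ and for its negative, so the very same $3M\sqrt{\log(2/\delta)/(2N)}$ term governs both directions simultaneously. A secondary point to state carefully is the identification of $\overline{f}^*$ with the clean-optimal classifier through the consistency relation above, which is what lets this noisy-risk excess bound be read as a guarantee about recovering the clean classifier.
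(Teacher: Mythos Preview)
Your proposal is correct and follows essentially the same route as the paper: the identical three-term excess-risk decomposition, dropping the middle term by the ERM property of $f$, bounding the two survivors by $2\sup_{g\in\mathcal{F}}|\overline{R}(g)-\overline{R}_N(g)|$, and then invoking the Rademacher bound chain (Proposition plus Theorem~\ref{TH2}) to reach the stated constants. Your added discussion of why a single two-sided event preserves the $\log 2/\delta$ constant is more careful than the paper's terse proof, which simply writes the final inequality without elaboration.
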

\begin{proof}
	\begin{align*}
	&\overline{R}(f) -\overline{R}(f^*)\\ =\ &\overline{R}(f)-\overline{R}_N(f)+\overline{R}_N(f)-\overline{R}_N(f^*)+\overline{R}_N(f^*) -\overline{R}(f^*)\\
	\leq\ & \overline{R}(f)-\overline{R}_N(f) +\overline{R}_N(f^*) -\overline{R}(f^*)\\
	\leq \ &2\sup_{f\in \mathcal{F}} |\overline{R}(f)- \overline{R}_N(f)|\\
	\leq \ & \frac{4cMB\left(\sqrt{2\log(2)d}+1\right)\prod\limits_{i=1}^d M_i}{\sqrt{N}}  + 6M \sqrt{\frac{\log 2/\delta}{2N}},
	\end{align*} 
	where the first inequality holds since $\overline{R}_N(f)-\overline{R}_N(f^*)\leq 0$.
\end{proof}

\begin{table*}[t] 
	\caption{Test accuracy (\%) of our method using GLC as initialization and GLC on CIFAR-10 and CIFAR-100 under symmetric and asymmetric noise with different noise levels. The best results are highlighted in \textbf{bold}.}\label{table11} \vspace{1mm}
	\centering
	\setlength{\tabcolsep}{1mm}
	\begin{tiny}
		\begin{tabular}{c|c|c|c|c|c|c|c|c|c|c}
			\toprule
			\multirow{2}{*}{Datasets}              &  \multirow{3}{*}{Methods}    &\multicolumn{5}{c|}{Symmetric Noise}   & \multicolumn{4}{c}{Asymmetric Noise}    \\
			\cline{3-11}
			&            &  \multicolumn{5}{c|}{Noise Rate $\eta$}         & \multicolumn{4}{c}{Noise Rate $\eta$}   \\ \cline{3-11}
			&            & 0               &             0.2 &           0.4    &             0.6 &            0.8  & 0.2  &0.4  &0.6  &0.8 \\ \hline \hline
			\multirow{2}{*}{CIFAR-10}&      GLC       &  94.43$\pm$0.27 & 90.06$\pm$0.30 & 86.78$\pm$0.45   & 82.52$\pm$0.76   &   62.40$\pm$0.14 & 92.87$\pm$0.16 &91.80$\pm$0.24  & 90.95$\pm$0.06 & 90.02$\pm$0.60 \\  
			&Ours\_G    &  \textbf{94.65$\pm$0.03  }        &\textbf{92.54$\pm$0.17}  & \textbf{89.73$\pm$0.41}   & \textbf{ 85.97$\pm$0.10} & \textbf{72.41$\pm$0.32}   & \textbf{93.65$\pm$0.05}   & \textbf{ 93.17$\pm$0.13}  & \textbf{92.57$\pm$0.18}  & \textbf{91.57$\pm$0.28}     \\  \hline \hline
			\multirow{2}{*}{CIFAR-100}& GLC        & 76.55$\pm$0.07  & 66.30$\pm$0.62 &59.25$\pm$0.69   &50.86$\pm$0.57 &15.07$\pm$0.78 & 70.83$\pm$0.25 &66.47$\pm$0.58  &54.82$\pm$0.99  & 28.18$\pm$1.88\\
			&Ours\_G & \textbf{76.75$\pm$0.09} &  \textbf{72.58$\pm$0.13}  &  \textbf{68.77$\pm$0.17}     & \textbf{57.85$\pm$0.51}    & \textbf{21.78$\pm$0.42}  & \textbf{74.74$\pm$0.08  } & \textbf{71.58$\pm$0.15}         &    \textbf{61.16$\pm$0.43}     &  \textbf{33.31$\pm$0.78  }      \\
			\bottomrule
		\end{tabular} \vspace{-4mm}
	\end{tiny}
\end{table*}

\begin{table*}[t] 
	\caption{Test accuracy (\%) of our method using Forward as initialization and Forward on CIFAR-10 and CIFAR-100 under symmetric and asymmetric noise with different noise levels. The best results are highlighted in \textbf{bold}.}\label{table22} \vspace{1mm}
	\centering
	\setlength{\tabcolsep}{1mm}
	\begin{tiny}
		\begin{tabular}{c|c|c|c|c|c|c|c|c|c|c}
			\toprule
			\multirow{2}{*}{Datasets}              &  \multirow{3}{*}{Methods}    &\multicolumn{5}{c|}{Symmetric Noise}   & \multicolumn{4}{c}{Asymmetric Noise}    \\
			\cline{3-11}
			&            &  \multicolumn{5}{c|}{Noise Rate $\eta$}         & \multicolumn{4}{c}{Noise Rate $\eta$}   \\ \cline{3-11}
			&            & 0               &             0.2 &           0.4    &             0.6 &            0.8  & 0.2  &0.4  &0.6  &0.8 \\ \hline \hline
			\multirow{2}{*}{CIFAR-10}&        Forward&94.33$\pm$0.31   & 88.26$\pm$0.22 & 83.23 $\pm$0.56 & 78.19$\pm$1.12    & 61.66 $\pm$3.54 & 91.34$\pm$0.28 & 89.87$\pm$0.61 &87.24$\pm$0.96  &81.07$\pm$1.92  \\  
			& Ours\_F    &\textbf{94.52$\pm$0.03   }      &\textbf{ 92.19$\pm$0.07}& \textbf{89.61$\pm$0.18} &  \textbf{85.60$\pm$0.30} &  \textbf{68.49$\pm$0.33}&  \textbf{93.51$\pm$0.04}& \textbf{92.98$\pm$0.20} &\textbf{92.30$\pm$0.04 }&\textbf{ 91.20$\pm$0.31 }\\  \hline \hline
			\multirow{2}{*}{CIFAR-100}& Forward    &   76.45$\pm$0.03& 63.71$\pm$0.49   &49.34$\pm$0.60  & 37.90$\pm$0.76  &9.57$\pm$1.01&  64.97$\pm$0.47           &52.37$\pm$0.71  & 44.58$\pm$0.60 & 15.84$\pm$0.62\\
			& Ours\_F          &   \textbf{76.71$\pm$0.05}&  \textbf{70.74$\pm$0.28} &   \textbf{65.88$\pm$1.06}  & \textbf{57.10$\pm$0.58}& \textbf{17.68$\pm$0.11}  & \textbf{71.99$\pm$0.17 }   & \textbf{67.60$\pm$0.32}& \textbf{57.88$\pm$0.85 }& \textbf{16.76$\pm$0.30 } \\
			\bottomrule
		\end{tabular} \vspace{-4mm}
	\end{tiny}
\end{table*}

\section{Discussion on initialized strategy of our method}
The noise transition of our method, as well as comparison methods S-Model \cite{goldberger2016training} and T-Revision \cite{xia2019anchor}, is initialized by estimation results of GLC. It can be seen from Table \ref{table11} that our method (Ours\_G) obtains significant improvement than GLC's original results in all noise cases.
To evaluate the stability of our method on initialization, we have also used unsupervised Forward \cite{patrini2017making} as initialization in our method (F-Ini). Compared with Forword's original performance as shown in Table \ref{table22}, our method (Ours\_F) can also bring evident improvement. Comparatively, G-Ini attains better final performance than F-Ini in most cases. We thus suggest using GLC's result as initialization in our paper.

\section{Noise Transition Matrix in Our Paper}
The graph illustration of asymmetric noise about CIFAR-10 and T-ImageNet we use in our paper can be found in Fig. \ref{fig2}.

\begin{figure}[htb]
	\centering
	\subfigcapskip=-1.5mm
	\subfigure[CIFAR-10]{
		\label{fig2a} 
		\includegraphics[width=0.22\textwidth]{./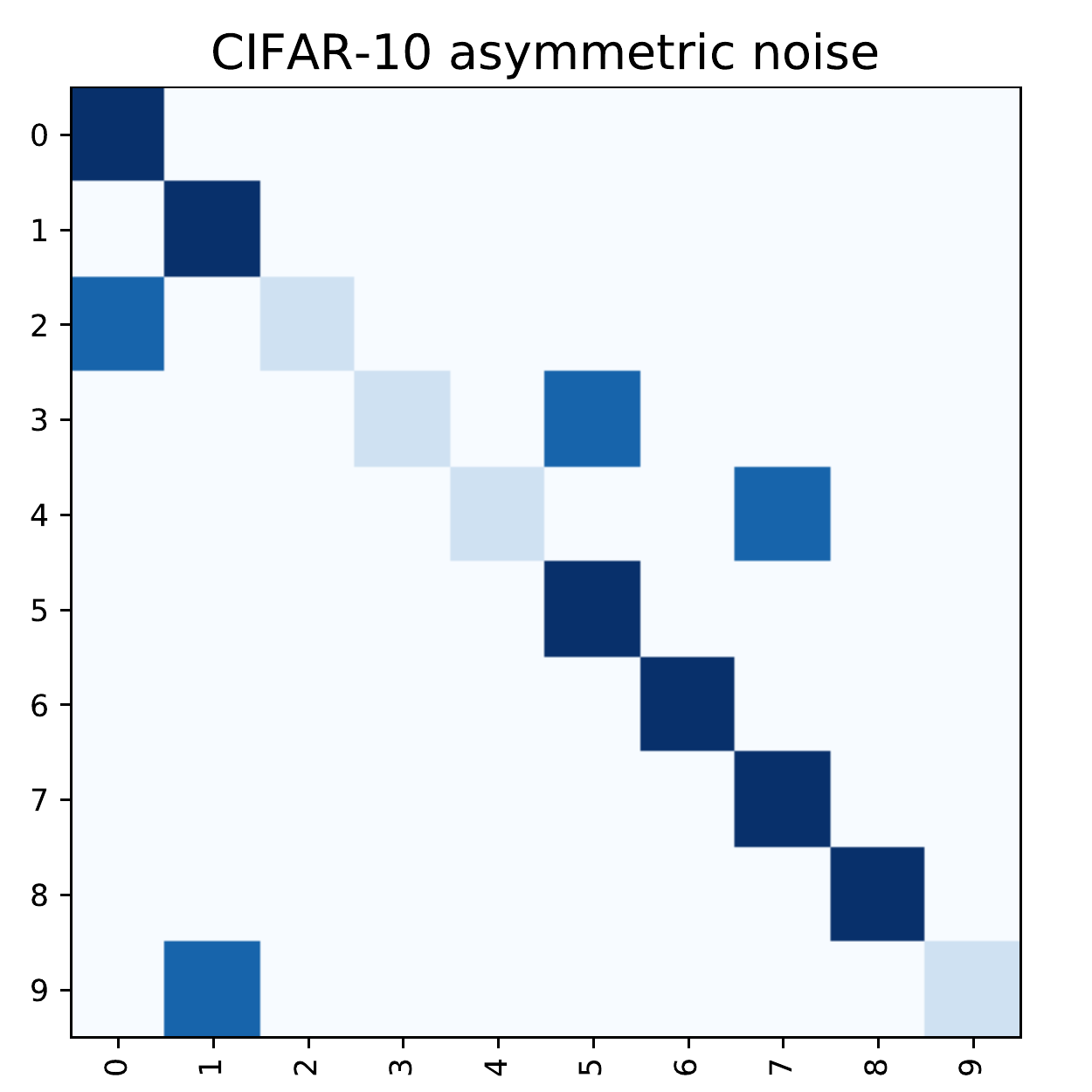}} \ 
	\subfigure[T-ImageNet]{
		\label{fig2b} 
		\includegraphics[width=0.22\textwidth]{./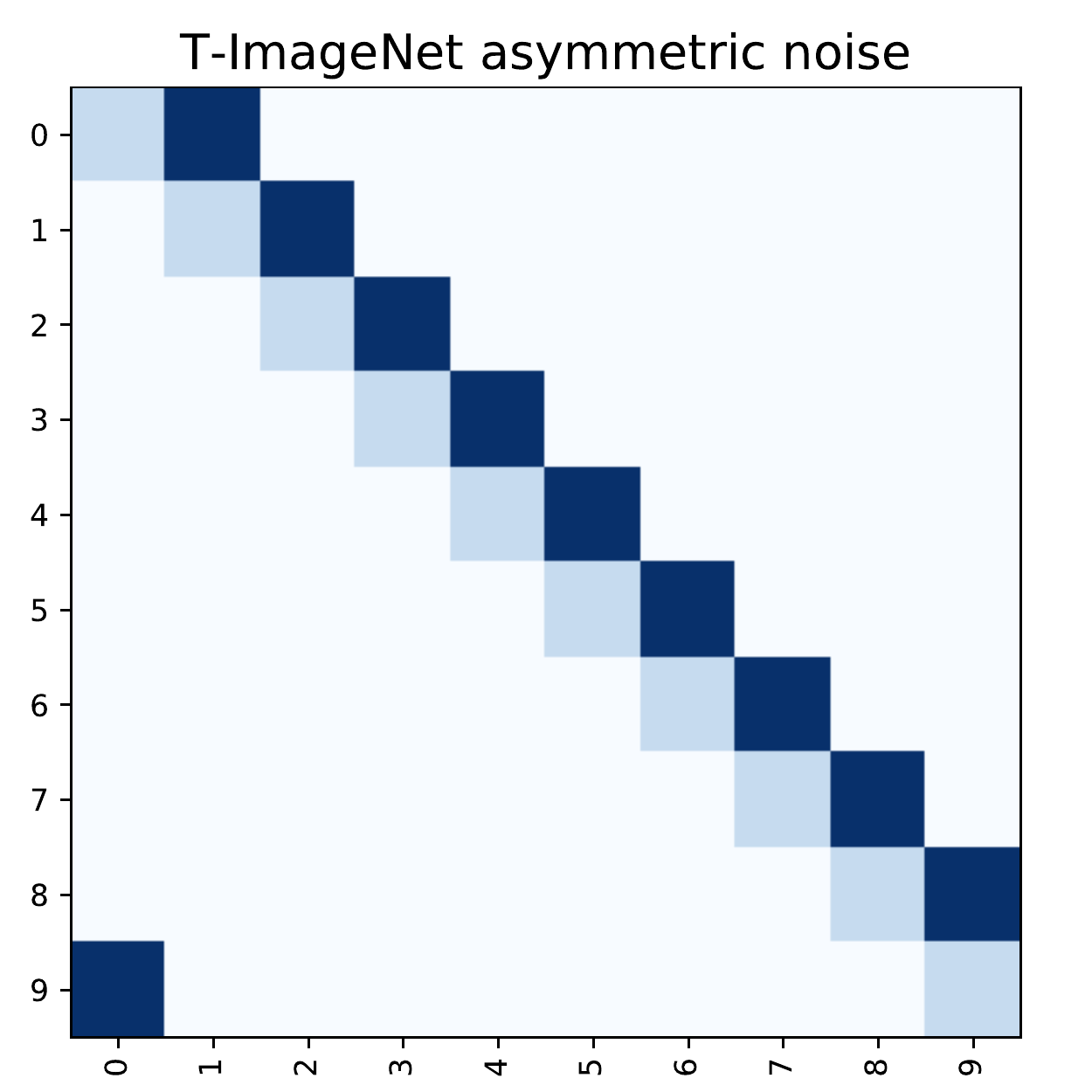}} \vspace{-4mm}
	\caption{Graph illustration of asymmetric noise about CIFAR-10 and T-ImageNet we use under 80\% noise ratio. }\label{fig2}
	\vspace{-5mm}
\end{figure}

\end{document}